\def\eqref#1{equation~\ref{#1}}
\def\1{\bm{1}}
\def\rvb{{\mathbf{b}}}
\def\rvc{{\mathbf{c}}}
\def\rvh{{\mathbf{h}}}
\def\rvK{{\mathbf{K}}}
\def\rvm{{\mathbf{m}}}
\def\rvs{{\mathbf{s}}}
\def\rvx{{\mathbf{x}}}
\DeclareMathAlphabet{\mathsfit}{\encodingdefault}{\sfdefault}{m}{sl}
\SetMathAlphabet{\mathsfit}{bold}{\encodingdefault}{\sfdefault}{bx}{n}
\newcommand{\E}{\mathbb{E}}
\newcommand{\Var}{\mathrm{Var}}
\newcommand{\Ent}{\mathbb{H}}
\definecolor{Gray}{gray}{0.85}
\newcolumntype{g}{>{\columncolor{Gray}}c}
\title{Hallucination Detection on a Budget: \\ Efficient Bayesian Estimation of Semantic Entropy}
\newtheorem{theorem}{Theorem}[section]
\newtheorem{lemma}[theorem]{Lemma}
\author{\name Kamil Ciosek 
      \email kamilc@spotify.com\\
      \addr Spotify
      \AND
      \name Nicol\`o Felicioni
      \email nicolof@spotify.com  \\
      \addr Spotify
      \AND
      \name Sina Ghiassian
      \email sinag@spotify.com \\
      \addr Spotify
      }
\begin{document}

\maketitle

\begin{abstract}
Detecting whether an LLM hallucinates is an important research challenge. One promising way of doing so is to estimate the semantic entropy \citep{farquhar2024detecting} of the  distribution of generated sequences. We propose a new algorithm for doing that, with two main advantages. First, due to us taking the Bayesian approach, we achieve a much better quality of semantic entropy estimates for a given budget of samples from the LLM. Second, we are able to tune the number of samples adaptively so that `harder' contexts receive more samples. We demonstrate empirically that our approach systematically beats the baselines, requiring only 53\% of samples used by \citet{farquhar2024detecting} to achieve the same quality of hallucination detection as measured by AUROC. Moreover, quite counterintuitively, our estimator is useful even with just one sample from the LLM.    
\end{abstract}

\section{Introduction}
Detecting hallucinations in LLMs is a task of huge practical significance \citep{hallucination-survey}. An important subset of hallucinations, called `confabulatory', amounts to the model making up confabulations or statements with made-up meanings \citep{filippova2020controlled, maynez2020faithfulness}. Recently, semantic entropy \citep{farquhar2024detecting} has been introduced as an important indicator for detecting if a model exhibits this type of hallucination. Semantic Entropy is based on two principles. The first one is to measure a type of \emph{Shannon entropy} of the sequences generated by a model, reflecting the idea that large entropy indicates confounding or a lack of knowledge. The second principle is to do the measurement in the space of \emph{meanings} rather than directly operating on raw token sequences. By doing so, one can leverage the insight that in many cases, distinct token sequences can have the same meaning. It turns out that combining these insights to estimate semantic entropy and then thresholding on it value amounts to a highly competitive hallucination detection method \citep{farquhar2024detecting}.  

While semantic entropy is a state-of-the art method of hallucination detection, computing it has a high cost. It first requires the generation of several independent answers to the same question and then a quadratic number of calls to the function determining if two meanings are the same. In fact, the work of \citet{farquhar2024detecting} used ten generations per prompt, which is prohibitively expensive in many practical cases. We address this bottleneck by making semantic entropy estimation much cheaper. We achieve this by leveraging insights from Bayesian literature about entropy estimation \citep{wolpert1994estimating, hausser2009entropy, archer2014bayesian}, building up a probabilistic belief about the underlying distribution over meanings and reasoning about how the belief in the space of meaning distributions affects the belief in the space of possible values of the entropy. We further study a novel, adaptive, setting, where `harder' prompts are afforded a larger budget of samples. In this setting, the efficiency of our estimator can be increased even further.

\paragraph{Contributions} We develop a new system for measuring semantic entropy, based on Bayesian principles. Compared to the work of \citet{farquhar2024detecting}, we outperform all other ways of measuring semantic entropy, reducing the sample complexity measured as the number of LLM generations needed to achieve the same performance by 47\% on average across datasets. We also release several datasets for semantic entropy estimation, enabling researchers without access to GPU resources to work on even better estimators.

\section{Preliminaries}
\paragraph{Language Generation} Denote with $X$ the set of prompts\footnote{Occasionally, the notion of a `context' is used in addition to the prompt, to model the phenomenon that the same prompt can have different meanings in different contexts. To keep our notation simple, we don't explicitly model contexts. However, if one wants to generalize our results to contexts, it can be done by considering $\rvx$ to be a context-prompt tuple.} the LLM can be asked to respond to. One instance\footnote{We use examples borrowed from \citet{farquhar2024detecting}.} of a prompt $\rvx \in X$ would be
\begin{gather*}
     \rvx = \text{\emph{`Where is the Eiffel Tower?'}}
\end{gather*}
Denote with $S_{\rvx}$ the set of all reply sequences given a prompt $\rvx$.\footnote{Typically, both $X$ and $S_\rvx$ are the set of natural language sequences. However, whether $X = S_\rvx$ is immaterial for this paper.}  Denote the probability of the LLM generating a sequence $s \in S_\rvx$ in response to the prompt $\rvx \in X$ with $p(\rvs | \rvx)$. One possible continuation is 
\begin{gather*}
     \rvs = \text{\emph{`It's Paris.'}}
\end{gather*}
We model both $\rvs$ and $\rvx$ as random variables, denoting them in bold.

\paragraph{Meanings} Semantic entropy is always conditioned on a prompt. We are going to consider a given generic context $\rvx$. Denote the set of meanings (also known as meaning classes or semantic classes) with $M_\rvx$. The set $M_\rvx$ is a partition of the set of $S_\rvx$.\footnote{The term `partition' is used in the mathematical way so that a sequence $s \in S_x$ always has exactly one meaning.} We assume that the set of meanings is finite (although we do not necessarily know its cardinality). Denote with 
\begin{gather*}
f^\rvx(\rvs): S_\rvx \rightarrow M_\rvx    
\end{gather*}
the function that determines the meaning of the sequence $\rvs$ in context $\rvx$. While $f^\rvx$ is typically implemented using calls to an entailment oracle,\footnote{The entailment oracle is a function that can tell if two meanings are distinct.} we abstract away this implementation detail in this paper. For a random sequence $\rvs \sim p(\cdot|\rvx)$, we consider the random variable
\begin{gather*}
\rvm = f^\rvx(\rvs).
\end{gather*}

\paragraph{Semantic Entropy} The semantic entropy corresponding to the context $\rvx$ is defined as the Shannon entropy of the random variable $\rvm$:
\begin{gather*}
\text{SE}_\rvx = \Ent[\rvm]. 
\end{gather*}
In the remainder of the paper, we will occasionally drop the subscript $\rvx$ where the dependence on the context $\rvx$ is obvious. Semantic entropy is useful for detecting hallucinations because of the following crucial observation \citep{farquhar2024detecting}.
\begin{tcolorbox}
\centering
Higher values of $\text{SE}_\rvx$ imply the model's response to prompt $\rvx$ is more likely to be a hallucination. 
\end{tcolorbox}

\paragraph{The Estimation Problem} 
The aim of this paper is to estimate semantic entropy from a finite dataset, based on $N$ calls to the target LLM. For a given context $\rvx$, our samples are represented as a list of independently generated sequences
\begin{gather*}
    \rvs_1, \dots, \rvs_N \sim p(\cdot | \rvx).
\end{gather*}
For each sequence, we can determine its meaning, obtaining the corresponding list of meanings 
\begin{gather*}
\rvm_1, \dots, \rvm_N, \;\; \text{where} \;\; \rvm_i = f^\rvx(\rvs_i).        
\end{gather*}
We are also given the probabilities of generating $\rvs_1, \dots, \rvs_N$, which we denote with $p(\rvs_i | \rvx)$. Note that these probabilities can be obtained at no extra cost when generating sequences from the LLM. Our overall dataset is defined as
\begin{gather*}
    \mathcal{D} = (\rvs_1, \rvm_1, p(\rvs_1 | \rvx)) , \dots, (\rvs_N, \rvm_N, p(\rvs_N | \rvx)).
\end{gather*}
The dataset can in general have repeated elements. It is important to note that we only have the probabilities for the sequences that were actually generated, which might represent a very small fraction of all possible sequences. Moreover, an important feature of our problem is that we want to achieve reasonable estimates using as few samples as possible. When generating the dataset, we do have the ability to ask for more data, i.e.\ increase $N$ until we are satisfied that our estimate of semantic entropy is good enough. We will make this precise in Section \ref{sec-estimator}.

\section{ A Bayesian Estimator for Semantic Entropy}
\label{sec-estimator}
We now give a sketch of our estimation process. Since we have finite data, our estimate of semantic entropy will be noisy. Adopting the Bayesian philosophy, we construct a random variable $\rvh$ that represents our belief about the value of the semantic entropy $\text{SE}_\rvx$, based on limited available data contained in a dataset $\mathcal{D}$ (we define $\rvh$ formally later on in the Section). Since we are motivated by detecting hallucinations, our focus is on measuring the quantities 
\begin{gather}
\label{eq-evh}
    \E\left[\rvh \right] \;\; \text{and} \;\; \Var\left[ \rvh \right], 
\end{gather}
i.e. the mean and variance of our Bayesian belief about what the value of the semantic entropy is. 

In this Section, we describe a Bayesian process for forming a probabilistic belief over $\rvh$. For presentation purposes, we first derive our estimator under the assumption that that the number of meaning classes is known, i.e. $|M_\rvx| = K$. Under this assumption, in Section \ref{sec-estimator-basic}, we describe the basic variant of the estimator, which only uses the list of meanings $\rvm_1, \dots, \rvm_N$. In Section \ref{sec-estimator-probs}, we then extend the estimator to also make use of the probabilities of the generated sequences $p(\rvs_1 | \rvx), \dots, p(\rvs_N | \rvx)$. In Section \ref{sec-unknown-support}, we remove the assumption that $K$ is known, defining a hierarchical Bayesian system that maintains a belief about $K$. In Section \ref{sec-pseudocode}, we summarize our methodology in the form of pseudo-code.

\subsection{Basic Variant of the Estimator} 
\label{sec-estimator-basic}
We first summarize the dataset, counting how many times we sampled each meaning. The counter for meaning $j \in M$ is defined as
\begin{gather}
  \label{eq-counts}
    \rvc_j = \left |\{ i \;:\; \rvm_i = j \} \right |.
\end{gather}
We have $\sum_j \rvc_j = N$. We seek to use the information from the counters to get an idea about how the true distribution over meanings looks like. We adopt the Bayesian modeling philosophy, using a belief distribution. Specifically, our Bayesian belief about the probability distribution over meanings is modeled as  
\begin{gather}
\label{eq-belief}
    B_p = \text{Dirichlet}(\alpha + \rvc_1, \dots, \alpha + \rvc_{K}),  
\end{gather}
where we used the letter $B_p$ to indicate that the probability distribution is used as a belief and $K$ is the number of meanings. The value $\alpha$ represents a prior of the Dirichlet distribution and is a hyper-parameter of our method\footnote{We study the sensitivity of our method to the choice of $\alpha$ in Appendix \ref{appdx-hyper-s}.}. Equation \ref{eq-belief} has a Bayesian interpretation as the posterior distribution, when the prior is chosen to be $\text{Dirichlet}(\alpha, , \dots, \alpha)$, and the likelihood is categorical. Consider a random variable distributed according to $B_p$: 
\begin{gather*}
    \rvb \sim B_p.
\end{gather*}
Here $\rvb \in \Delta^K$ is itself a probability distribution, representing the fraction of the total probability mass assigned to each meaning. A belief about $\rvb$ induces a belief about its entropy, represented with the random variable
\begin{gather*}
    \rvh \triangleq \Ent[\rvb]. 
\end{gather*}
The expectation as per \eqref{eq-evh} can be computed analytically as
\begin{gather*}
    \E\left[ \rvh \right] = \int_\rvb \Ent[\rvb] p_{B_p}(\rvb)  d\rvb = \psi \left( 1 + K \alpha + \sum_j \rvc_j \right) - \sum_j \frac{\alpha + \rvc_j}{K \alpha + \sum_j' \rvc_j'} \psi \left(\alpha + \rvc_j + 1\right),
\end{gather*}
where $\psi$ is the digamma function (see Appendix \ref{appdx-integrals-closed} for proof and a derivation of a similar expression for the variance integral).

\subsection{An Estimator that Also Uses Sequence Probabilities}
\label{sec-estimator-probs}
An LLM doesn't just give us meaning-classes but also probabilities of the generated continuations. Conditioning on this additional information can make our estimates of semantic entropy much better. Recall that the probability of generating $\rvs$ is denoted with $p(\rvs_i | \rvx)$. We can define a constraint bounding the probability of each meaning, writing
\begin{gather}
    \label{eq-constraint}
    \text{constr}(\rvb, \mathcal{D}) \; \coloneq \;\; \left\{ \rvb_j \geq \sum_{ \rvs \in \{\rvs \; : \; \rvs \in \mathcal{D} \;, \; f^\rvx(\rvs) \; = \;  j \} } p(\rvs | \rvx) \right\}_{j=1,\dots,K}.
\end{gather}
Intuitively, \eqref{eq-constraint} holds because the probability of a meaning $j$ is at least equal to the sum of probabilities of distinct sequences with that meaning. The bound is not an equality because it is possible (and typically the case) that we didn't generate all sequences that correspond to this meaning. Probabilistically, the constraint can be interpreted as an event, i.e.\ something we can condition on.
We in fact do that, modifying the estimator to sample from belief conditional on the constraint:
\begin{gather*}
    \rvb \sim B_p \; |\; \text{constr}.
\end{gather*}
This conditioning is in fact key to obtaining good empirical results and is, as far as we can tell, novel to our approach. While the conditioning operation allows us to leverage all information at our disposal, it also makes the process of computing the expectation in \eqref{eq-evh} more complicated. In practice, the integrals for $\E[\rvh]$ and $\Var[\rvh]$, which are defined as:
\begin{align*}
\E \left[ \rvh \right] &= \int_\rvb \Ent[\rvb]  p_B^\text{trunc}(\rvb; \mathcal{D}) d\rvb , \\
\Var \left[ \rvh \right] &= \left( \int_\rvb \Ent[\rvb]^2  p_B^\text{trunc}(\rvb; \mathcal{D}) d\rvb \right) - (\E\left[ \rvh \right])^2 ,
\end{align*}
will have to be computed approximately using a Monte Carlo method.
Here, the density of a truncated Dirichlet random variable is defined as
\begin{gather}
    p_B^\text{trunc}(\rvb; \mathcal{D}) = \begin{cases} 
    \frac{p_B(\rvb)}{ \int_{\rvb \in \text{constr}(\rvb, \mathcal{D})} p_B(\rvb) d\rvb }, & \text{if } \rvb \in \text{constr}(\rvb, \mathcal{D}), \\
    0 & \text{if } \rvb \notin \text{constr}(\rvb, \mathcal{D}),
\end{cases}
\label{eq-p-trunc}
\end{gather}
where we used the notation $p_B(\rvb)$ to denote the Dirichlet PDF. We treat a particular choice of the integration algorithm as an implementation detail and defer its discussion to Appendix \ref{appdx-integrals-truncated}. Note that, even though we are using a Monte Carlo method, obtaining estimates of semantic entropy is is still relatively cheap. This is because the integration routine is orders of magnitude cheaper than increasing $N$. In other words, sampling meanings from an LLM is expensive while MC integration has negligible cost.

\subsection{Unknown Number of Meanings}
\label{sec-unknown-support}
Previously, we assumed that we know the number of meanings possible for a given context $x$, i.e. $|M_x| = K$ for a known value of $K$ that can be used to design the estimator. This is not a realistic assumption since the number of possible meanings can be vastly different for each context and is not typically known a priori. We resolve this dilemma in a Bayesian way, representing our Bayesian belief about the number of meanings using a probability distribution. 
\begin{gather}
\label{eq-prior-support}
B_K = \text{Discrete}((K_1,\lambda_1),\dots,(K_M,\lambda_M)).
\end{gather}
Here, the parameters $\lambda_1,\dots,\lambda_M$ are relative frequencies of each support size $K_i$. The parameters can be computed using a small (separate) training dataset and $M$ is the maximum observed support size.

Our belief about the number of meanings can be used to estimate the entropy in a hierarchical way. Specifically, given we have observed that there are at least $K_\text{min}$ different meanings, we obtain the probability distribution 
\begin{gather*}
\rvK \sim B_K \;|\; \left( \rvK > K_\text{min} \right).
\end{gather*}
Here, we used bold font for $\rvK$ to denote it is a random variable. We can use probabilities of this discrete distribution to take an expectation of entropy estimates conditioned on particular values of $K$: 
\begin{gather}
\label{eq-aggregate}
    \E[\rvh] = \E_\rvK \left[ \E \left[ \rvh  | \rvK \right] \right], \quad
    \Var[\rvh] = \E_\rvK \left[ \Var \left[ \rvh  | \rvK \right] \right] + \Var_\rvK \left[ \E \left[ \rvh  | \rvK \right] \right],
\end{gather}
where the quantities $\E \left[\rvh  | \rvK \right]$ and $\Var \left[\rvh  | \rvK \right]$ can be obtained as described in Section \ref{sec-estimator-probs}. In our pseudo-code (see the next Section), \eqref{eq-aggregate} is assumed to be implemented using a procedure $\textsc{aggregateSupport}$.    

\subsection{Algorithm}
\label{sec-pseudocode}
\begin{algorithm}[t]
\caption{Estimate of Semantic Entropy for a prompt $\rvx$.}
\label{alg-estimator}
\begin{algorithmic}[1] 
    \State $\mathcal{D} \gets$ [$\;$]
    \Repeat
     \State $\rvs, p(\rvs|\rvx)$  $\gets$ \textsc{llmSample}() \Comment{Sample $\rvs$ and store the corresponding probability.}
     \State $\rvm \gets f^\rvx(\rvs)$ \Comment{Determine the meaning.}
     \State $\mathcal{D}$.append($\rvs, \rvm, p(\rvs|\rvx)$)
     \State $K_\text{min}$ $\gets | \{ \rvm \in \mathcal{D} \} |$
     \For{ $j \in \{ 1, \dots, K_\text{min} \}$ }
        \State $\rvc_j \gets  |\{ i \;:\; \rvm_i = j , \;\; \rvm_i \in \mathcal{D}\}|$  \Comment{Use \eqref{eq-counts}.}
     \EndFor
     \For{ $K \in \text{Support}\left( B_K \;|\; \left( \rvK > K_\text{min} \right) \right) $}
            \State $p_B^\text{trunc}(\rvb; \mathcal{D}, K)$ is defined as per \eqref{eq-p-trunc}. \Comment{  A truncated Dirichlet PDF.}
            \State $\widehat{e}_K \gets$
            \textsc{numericallyIntegrate}($\int_\rvb \Ent[\rvb]
     p_B^\text{trunc}(\rvb; \mathcal{D}; K) d \rvb$)
            \State $\widehat{e^2}_K \gets$ \textsc{numericallyIntegrate}($\int_\rvb \Ent[\rvb]^2
     p_B^\text{trunc}(\rvb; \mathcal{D}; K) d \rvb$)
            \State $\widehat{var}_K = \widehat{e^2} - (\widehat{e})^2$
      \EndFor
    \State $\widehat{e}, \widehat{var} \gets $ \textsc{aggregateSupport}($\widehat{e}_k$, $\widehat{var}_k$) \Comment{Apply equation \ref{eq-aggregate} for $k \in \{ K_\text{min},\dots, K_\text{max}\}$. }   
    \Until{$\widehat{var} \geq \gamma$ } 
    \State \Return $ \widehat{e} $
\end{algorithmic}
\end{algorithm}
\paragraph{Stopping Rule} Having specified the estimator for the quantities $\E[\rvh]$ and $\Var[\rvh]$, we still need to specify the stopping rule for determining the right number of samples $N$. It is natural to keep drawing more samples until
\begin{gather}
    \label{eq-var-thr}
    \Var[\rvh] \geq \gamma,
\end{gather}
where $\gamma$ is a desired level of precision. Increasing $\gamma$ indicates we are satisfied with lower-quality semantic entropy estimates, which allows us to use smaller $N$. Intuitively, this stopping rule reflects the fact that we only want to know the semantic entropy to a certain level of precision. We summarize the ideas introduced in Sections \ref{sec-estimator-basic}, \ref{sec-estimator-probs} and \ref{sec-unknown-support} in Algorithm \ref{alg-estimator}.

\paragraph{Benefits of the Bayesian Approach} We will see in Section \ref{sec-experiments} that our Bayesian estimator works better than other semantic entropy estimators in practice. We now discuss the three theoretical reasons why this is the case. First, our estimator leverages a small training set to learn the prior, as in \eqref{eq-prior-support}. Other estimators don't do this because, as they are not Bayesian, they do not have the concept of the prior.\footnote{Note that the size of the training set does not scale with the number of queries to the estimator at inference time, hence the cost becomes entirely amortized.} Second, the Bayesian estimator combines all evidence (the probabilities of generated sequences and the meanings) in an optimal way, based on the Bayes rule. Other estimators don't do this. Third, our methodology gives us a handle on $\Var[\rvh]$ (the variance in the belief about what the true semantic entropy is) allowing us to further improve sample efficiency by using stopping rule in \eqref{eq-var-thr}. 

\section{Baselines}
\subsection{Other Estimators For Semantic Entropy}
There are two existing baselines for measuring semantic entropy, both of which coming from the paper by \citet{farquhar2024detecting}. They are called \emph{discrete semantic entropy} and \emph{semantic entropy} in that paper, although we use the term \emph{histogram semantic entropy} for the former and  \emph{rescaled semantic entropy} for the latter to avoid confusion with the concept of semantic entropy itself, which is independent of the estimator used. 

\paragraph{Histogram Semantic Entropy} This estimator samples a fixed number of sequences, computes the meaning of each sequence and then computes the entropy of the empirical histogram of the meaning distribution. It is computed from the meaning counts $\rvc_j$ as $-\sum_j(\frac{\rvc_j}{N}) \log\frac{\rvc_j}{N}$.

\paragraph{Rescaled Semantic Entropy} This estimator also samples a fixed number of sequences and then computes the meaning of each. However, it assigns probabilities to each meaning in a different way. First, one defines the un-normalized probability distribution
\begin{gather*}
q(\rvm|\rvx) = \sum_{ \rvs \in \{\rvs \; : \; \rvs \in \mathcal{D} \;, \; f^\rvx(\rvs) \; = \;  \rvm \} } p(\rvs | \rvx).     
\end{gather*}
This is then normalized as
\begin{gather*}
p(\rvm|\rvx) = \frac{q(\rvm|\rvx)}{\sum_\rvm q(\rvm|\rvx)},  
\end{gather*}
and the semantic entropy is computed as the Shannon entropy of this distribution. In addition, the probabilities $p(\rvs | \rvx)$, which normally correspond to the multiplication of the probabilities of each token conditioned on past tokens, are heuristically replaced by the exponent of the mean log probability of each token, a process known as length normalization. We report results for the rescaled estimator both with and without the (heuristic) length normalization.  

\subsection{Other Baselines for Hallucination Detection}
Semantic Entropy is not the only way of detecting hallucinations. We consider two non-entropy baselines.

\paragraph{P(True)} It has been shown that LLMs have surprising introspective abilities, i.e. one can often see if the LLM is hallucinating simply by simply asking it \citep{kadavath2022language}. We use the same procedure for measuring P(True) as was employed by \citet{farquhar2024detecting}.  

\paragraph{Sequence Log Likelihood} Recently, \citet{aichberger2024rethinking} suggested using the log probability of the sequence generated greedily as a predictor of hallucinations, justifying it using the notion of zero-one scoring rule \citep{hofman2024quantifying}. Unfortunately, this method is not directly compatible with the evaluation protocol of \citet{farquhar2024detecting}, which does not perform greedy generation. In order to stay within the boundaries of that protocol, we instead used the log likelihood of a sequence generated with a low temperature.\footnote{The temperature was set to $0.1$.} 

\section{Prior Work}
\paragraph{Hallucination Detection} We do not provide a complete survey of hallucinations in Large Language Models, instead referring the reader to the work of \citet{hallucination-survey}. We focus our work on combating `confabulatory' hallucinations also addressed by \citet{farquhar2024detecting}, i.e. situations where the LLM is randomly adding spurious facts to its replies. This is the same kind of hallucinations that was considered by \citet{filippova2020controlled} and \citet{maynez2020faithfulness}.

\paragraph{Semantic Entropy} We build on the works that pioneered semantic entropy \citep{kuhn2023semantic, farquhar2024detecting} by providing a more statistically efficient estimator. Our work is different from entropy probes \citep{kossen2024semantic}, which attempt to distill a thresholded version of semantic entropy into a classifier, although the ideas can certainly be used in conjunction with each other (similar ideas were also explored by \citet{chen2024inside}). Our estimators do not attempt to leverage similarity in the meaning clusters \citep{nikitin2024kernel, qiu2024semantic}, instead focusing on obtaining as accurate estimates of vanilla semantic entropy for a given sample budget as possible.

\paragraph{Bayesian Entropy Estimation} \citet{wolpert1994estimating} have provided the foundations for Bayesian estimators of entropy for arbitrary priors, and also pioneered the specialization to the case of the Dirichlet prior. \citet{hausser2009entropy} have provided an explicit summary of the equivalences between the Dirichlet-Bayesian estimator and various pre-existing entropy estimators, for different values of the Dirichlet prior parameter. \citet{archer2014bayesian} have provided an overview of past work on Bayesian entropy estimation, in addition to extending the framework to distributions with countably infinite support.\footnote{We do not use their infinite support framework, instead modeling unknown support using techniques described in Section \ref{sec-unknown-support}.}

\paragraph{Epistemic Uncertainty in LLMs} The distinction between epistemic and aleatoric uncertainty \citep{gal2016uncertainty, gal2017deep, kendall2017uncertainties} has been proposed as a useful idea in modeling the behavior of LLMs \citep{abbasi2024believe}. In this paper, we do not distinguish between aleatoric and epistemic uncertainty, instead staying in the framework of \citet{farquhar2024detecting} and modeling the combined predictive uncertainty. While an accurate model of epistemic uncertainty would almost certainly lead to improved hallucination detection, we leave such extensions to further work. 

\paragraph{Human Perception of Hallucinations} Hallucinations are related to how confident LLMs are about their outputs. Recent research \citep{steyvers2025large} studies how such self-confidence intrinsic in LLMs relates to how humans perceive it. Our work is largely orthogonal to this effort. Indeed, we treat the definition of semantic entropy as a given and focus on finding the statistically most efficient way to estimate it.  

\section{Experiments}
\label{sec-experiments}

\subsection{Experimental Setup}
\paragraph{Evaluation Methodology} Our goal is to measure the quality of semantic entropy estimates as quantified with AUROC on hallucination detection tasks. To do so, we follow the methodology from the paper by \citet{farquhar2024detecting} as much as possible, deviating from it only by (1) separating out the dataset generation phase and the entropy estimation phase, (2) varying the sample budget $N$ and (3) removing bugs from the dataset generation code. We defer the detailed discussion of the methodology to Appendix \ref{appdx-eval}.

\begin{table}[t]
\small \setlength{\tabcolsep}{2pt} \centering
$N=2$ \\
\begin{tabular}{llccgccc} 
\hline
LLM & Dataset  & LL & P(true) & SE-Bayes & SE-Histogram & SE-Rescaled & SE-Rescaled (h) \\ 
\hline
\multirow{4}{*}{Llama-2}  & NQ  & $0.583 \pm 0.000$ & $0.461 \pm 0.000$ & $\mathbf{0.723 \pm 0.007}$ & $0.652 \pm 0.010$ & $0.654 \pm 0.013$ & $0.644 \pm 0.014$ \\ 
  & SVAMP  & $0.631 \pm 0.000$ & $0.469 \pm 0.000$ & $\mathbf{0.855 \pm 0.022}$ & $0.748 \pm 0.024$ & $0.749 \pm 0.027$ & $0.759 \pm 0.025$ \\ 
  & Squad  & $0.624 \pm 0.000$ & $0.441 \pm 0.000$ & $\mathbf{0.735 \pm 0.007}$ & $0.654 \pm 0.012$ & $0.659 \pm 0.017$ & $0.649 \pm 0.010$ \\ 
  & Trivia QA  & $0.594 \pm 0.000$ & $0.436 \pm 0.000$ & $\mathbf{0.737 \pm 0.006}$ & $0.670 \pm 0.012$ & $0.675 \pm 0.012$ & $0.673 \pm 0.010$ \\ 
 \hline 
\multirow{4}{*}{Llama-3.2}  & NQ  & $0.627 \pm 0.000$ & $0.615 \pm 0.000$ & $\mathbf{0.728 \pm 0.008}$ & $0.640 \pm 0.013$ & $0.663 \pm 0.017$ & $0.649 \pm 0.020$ \\ 
  & SVAMP  & $0.647 \pm 0.000$ & $0.414 \pm 0.000$ & $\mathbf{0.845 \pm 0.022}$ & $0.761 \pm 0.032$ & $0.774 \pm 0.030$ & $0.771 \pm 0.028$ \\ 
  & Squad  & $0.610 \pm 0.000$ & $0.555 \pm 0.000$ & $\mathbf{0.664 \pm 0.019}$ & $0.608 \pm 0.024$ & $\mathbf{0.642 \pm 0.029}$ & $\mathbf{0.621 \pm 0.027}$ \\ 
  & Trivia QA  & $0.614 \pm 0.000$ & $0.710 \pm 0.000$ & $\mathbf{0.768 \pm 0.004}$ & $0.699 \pm 0.005$ & $0.706 \pm 0.007$ & $0.708 \pm 0.008$ \\ 
 \hline 
\multirow{1}{*}{Llama-3.3-70B}  & Trivia QA  & $0.556 \pm 0.000$ & $0.686 \pm 0.000$ & $\mathbf{0.775 \pm 0.003}$ & $0.653 \pm 0.006$ & $0.650 \pm 0.006$ & $0.651 \pm 0.006$ \\ 
 \hline 
\multirow{4}{*}{Mistral}  & NQ  & $0.695 \pm 0.000$ & $\mathbf{0.731 \pm 0.000}$ & $0.705 \pm 0.013$ & $0.647 \pm 0.007$ & $0.700 \pm 0.007$ & $0.654 \pm 0.009$ \\ 
  & SVAMP  & $0.645 \pm 0.000$ & $0.843 \pm 0.000$ & $\mathbf{0.876 \pm 0.011}$ & $0.793 \pm 0.023$ & $0.815 \pm 0.028$ & $0.812 \pm 0.024$ \\ 
  & Squad  & $\mathbf{0.698 \pm 0.000}$ & $0.687 \pm 0.000$ & $0.667 \pm 0.010$ & $0.618 \pm 0.005$ & $0.671 \pm 0.017$ & $0.631 \pm 0.010$ \\ 
  & Trivia QA  & $0.672 \pm 0.000$ & $0.647 \pm 0.000$ & $\mathbf{0.682 \pm 0.008}$ & $0.638 \pm 0.011$ & $0.645 \pm 0.013$ & $0.643 \pm 0.012$ \\ 
 \hline 
\end{tabular}

$N=5$ \\
\begin{tabular}{llccgccc} 
\hline
LLM & Dataset  & LL & P(true) & SE-Bayes & SE-Histogram & SE-Rescaled & SE-Rescaled (h) \\ 
\hline
\multirow{4}{*}{Llama-2}  & NQ  & $0.583 \pm 0.000$ & $0.461 \pm 0.000$ & $\mathbf{0.752 \pm 0.006}$ & $0.730 \pm 0.009$ & $0.701 \pm 0.012$ & $0.725 \pm 0.010$ \\ 
  & SVAMP  & $0.631 \pm 0.000$ & $0.469 \pm 0.000$ & $\mathbf{0.871 \pm 0.011}$ & $\mathbf{0.849 \pm 0.012}$ & $\mathbf{0.853 \pm 0.013}$ & $\mathbf{0.858 \pm 0.013}$ \\ 
  & Squad  & $0.624 \pm 0.000$ & $0.441 \pm 0.000$ & $\mathbf{0.774 \pm 0.008}$ & $0.756 \pm 0.004$ & $0.711 \pm 0.012$ & $0.752 \pm 0.005$ \\ 
  & Trivia QA  & $0.594 \pm 0.000$ & $0.436 \pm 0.000$ & $\mathbf{0.763 \pm 0.009}$ & $0.734 \pm 0.007$ & $0.737 \pm 0.006$ & $0.735 \pm 0.006$ \\ 
 \hline 
\multirow{4}{*}{Llama-3.2}  & NQ  & $0.627 \pm 0.000$ & $0.615 \pm 0.000$ & $\mathbf{0.760 \pm 0.008}$ & $0.732 \pm 0.012$ & $0.691 \pm 0.005$ & $0.733 \pm 0.012$ \\ 
  & SVAMP  & $0.647 \pm 0.000$ & $0.414 \pm 0.000$ & $\mathbf{0.870 \pm 0.009}$ & $\mathbf{0.860 \pm 0.010}$ & $0.850 \pm 0.004$ & $\mathbf{0.864 \pm 0.009}$ \\ 
  & Squad  & $0.610 \pm 0.000$ & $0.555 \pm 0.000$ & $\mathbf{0.710 \pm 0.017}$ & $\mathbf{0.707 \pm 0.012}$ & $0.667 \pm 0.021$ & $\mathbf{0.705 \pm 0.013}$ \\ 
  & Trivia QA  & $0.614 \pm 0.000$ & $0.710 \pm 0.000$ & $\mathbf{0.792 \pm 0.004}$ & $0.775 \pm 0.002$ & $0.763 \pm 0.005$ & $0.777 \pm 0.004$ \\ 
 \hline 
\multirow{1}{*}{Llama-3.3-70B}  & Trivia QA  & $0.556 \pm 0.000$ & $0.686 \pm 0.000$ & $\mathbf{0.793 \pm 0.005}$ & $0.750 \pm 0.005$ & $0.740 \pm 0.004$ & $0.747 \pm 0.006$ \\ 
 \hline 
\multirow{4}{*}{Mistral}  & NQ  & $0.695 \pm 0.000$ & $0.731 \pm 0.000$ & $\mathbf{0.780 \pm 0.006}$ & $0.762 \pm 0.007$ & $0.728 \pm 0.008$ & $0.756 \pm 0.007$ \\ 
  & SVAMP  & $0.645 \pm 0.000$ & $0.843 \pm 0.000$ & $\mathbf{0.880 \pm 0.019}$ & $\mathbf{0.866 \pm 0.021}$ & $\mathbf{0.855 \pm 0.027}$ & $\mathbf{0.878 \pm 0.022}$ \\ 
  & Squad  & $0.698 \pm 0.000$ & $0.687 \pm 0.000$ & $\mathbf{0.731 \pm 0.008}$ & $\mathbf{0.719 \pm 0.010}$ & $0.699 \pm 0.008$ & $0.712 \pm 0.008$ \\ 
  & Trivia QA  & $0.672 \pm 0.000$ & $0.647 \pm 0.000$ & $\mathbf{0.691 \pm 0.005}$ & $\mathbf{0.688 \pm 0.005}$ & $\mathbf{0.684 \pm 0.004}$ & $\mathbf{0.690 \pm 0.005}$ \\ 
 \hline 
\end{tabular}

\caption{Measured AUROC for a fixed budget of $N=2$ and $N=5$ samples per prompt.}
\label{tab:experiment_results}
\end{table}

\paragraph{LLMs and Source Datasets} We investigate the behavior of four LLMs. We use Llama-2-7b-chat for comparability with \citet{farquhar2024detecting}. We also evaluate on the much more modern Llama-3.2-3B-Instruct, the large Llama-3.3-70B-Instruct and on Mistral-Small-24B-Instruct-2501. These LLMs are referred to as Llama 2, Llama 3, Llama-3.3-70 and Mistral in our figures.  We used the TriviaQA \citep{2017arXivtriviaqa}, SQUAD \citep{rajpurkar-etal-2016-squad}, SVAMP \citep{patel2021nlp} and NQ \citep{lee2019latent} datasets. Due to computational constraints, we only ran the largest LLM on one dataset (TriviaQA).

\paragraph{Derivative Entropy Estimation Datasets} For each combination of LLM and dataset, we generated a derivative dataset of 100 LLM generations per prompt for 1000 prompts, which we then used to estimate semantic entropy. We will release these derivative datasets upon acceptance allowing researches without GPU access to work on even better estimators for semantic entropy.

\paragraph{Train and Test} Our Bayesian Semantic Entropy estimator requires a training set to estimate the prior on the size of the support of the meaning distribution as in \eqref{eq-prior-support}. We use the first 200 prompts from each derivative dataset as the training set and the remaining 800 as the test set. 

\paragraph{Temperature} Following the methodology of \citet{farquhar2024detecting}, the $N$ LLM responses are generated with temperature $1.0$. On the other hand, the LLM response about which we seek to determine if it is a hallucination is generated with temperature $0.1$. Because GPU response generation is  expensive, we did not tune those temperatures.

\subsection{Results}
We performed two types of experiment. First, we studied the setting of a fixed budget of samples per prompt. Second, we varied the number of samples per prompt, giving harder prompts more data. In both cases, we note that we can support $N=1$ because we have access to the probability of the generated sequence, which already gives us an imperfect but still useful handle on the entropy (for example, if the probability is close to one, we know that the entropy is almost zero).

\paragraph{Fixed Budget Per Prompt} Results for $N=2$ and $N=5$ samples per prompt\footnote{See Appendix \ref{additional-experiments} for results for other values of $N$.} are shown in Table \ref{tab:experiment_results}. Bold font is applied as follows: the estimator with the best mean performance is put in bold, together with all the others with overlapping confidence bars. It can be seen that the Bayesian estimator mostly outperformed or tied with other approaches to measuring semantic entropy, with the difference being greater for small $N$. We can also see that it is difficult to conclude which version of the rescaled estimator is better. 

\paragraph{Main Experiment: Adaptive Budget Per Prompt}
As described in Section \ref{sec-pseudocode}, the Bayesian framework gives us an additional handle on sample complexity in that we can use the variance of the belief about the semantic entropy as a proxy for confidence. 
Results are shown in Figures \ref{fig:res-l3-70B-auroc}, \ref{fig:res-l2-tqa-auroc}, \ref{fig:res-l3-tqa-auroc} and \ref{fig:res-mistral-tqa-auroc}. All confidence bars for the AUROC estimates in our paper represent $1.96$ times the standard error. It can be seen that our Bayesian estimator is nearly Pareto-optimal in the sense that we achieve better AUROC than other approaches to semantic entropy, regardless of the value of $N$. Note that performance of the adaptive Bayesian estimator for a given $N$ will in general be better than performance for the same fixed value of $N$. This is because, while the number of prompts is still $N$ on average, harder prompts will get more samples (and easier prompts will get fewer). Concerning the non-semantic-entropy baselines, we outperform them for all $N$ for Llama 2 and 3, while needing $N \geq 3$ for Mistral. We stress one additional take-away from the experiment: our Bayesian estimator is often competitive even for $N=1$. This is completely counterintuitive since the entailment oracle (a crucial component of semantic entropy) is not needed in that case.   

\paragraph{Note on Consistency} By definition, for a large enough sample budget, any consistent estimator will produce the same correct value of semantic entropy, which will give rise to the same value of AUROC. There are two reasons why such convergence does not happen in practice in our experiments. First, not all compared estimators are consistent (for example the rescaled estimator with heuristic sequence probabilities is not). Second, convergence to the exact same value might require a sample budged far in excess of 10.   

\begin{figure}[H]
    \centering
    \includegraphics[width=0.40\linewidth]{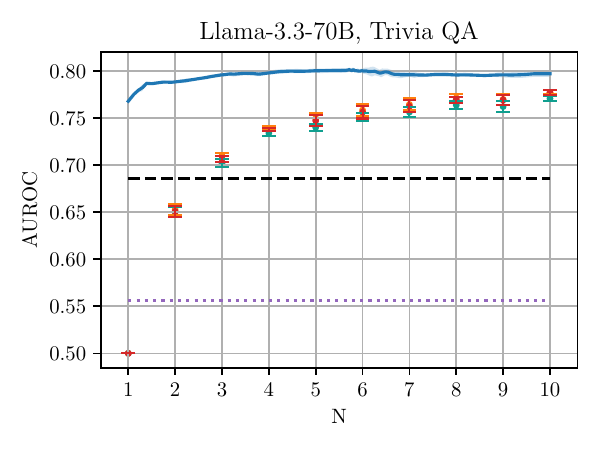}
    \\ \vspace{-1em}
    \footnotesize
    \raisebox{-0.15em}{\includegraphics[width=2em]{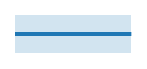}} Bayesian SE \hspace{0.3em}
    \raisebox{-0.15em}{\includegraphics[width=2em]{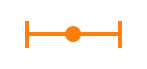}} Histogram \hspace{0.3em}
    \raisebox{-0.15em}{\includegraphics[width=2em]{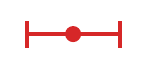}} Rescaled (heuristic) 
    \hspace{0.3em}
    \raisebox{-0.15em}{\includegraphics[width=2em]{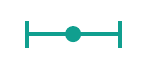}} Rescaled \hspace{0.3em}
    \raisebox{-0.15em}{\includegraphics[width=2em]{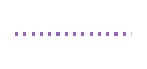}} Log Likelihood \hspace{0.3em}
    \raisebox{-0.15em}{\includegraphics[width=2em]{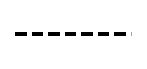}} P(True)
    \caption{Results in the adaptive budget setting (Llama 3.3, 70 billion parameters). }
    \label{fig:res-l3-70B-auroc}
\end{figure}

\begin{figure}
    \centering
    \includegraphics[width=0.40\linewidth]{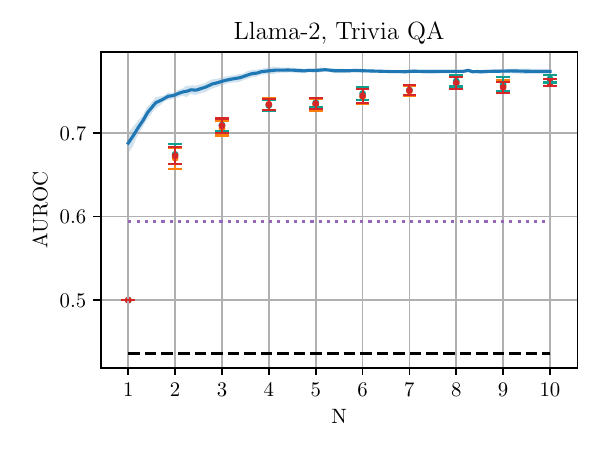}
    \includegraphics[width=0.40\linewidth]{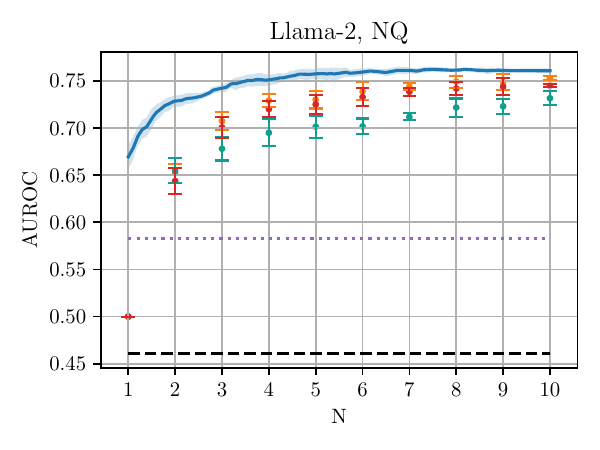}
    \\
    \includegraphics[width=0.40\linewidth]{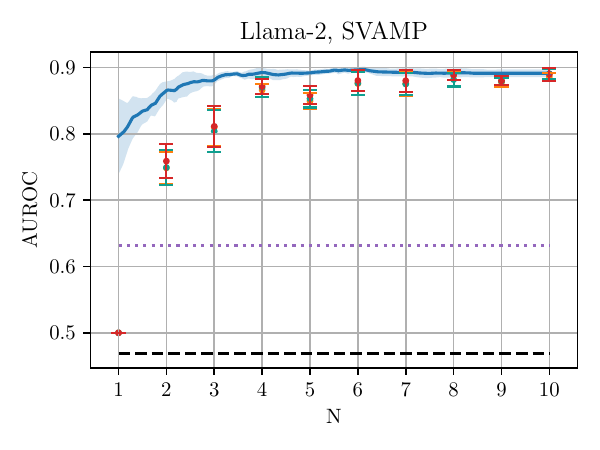}
    \includegraphics[width=0.40\linewidth]{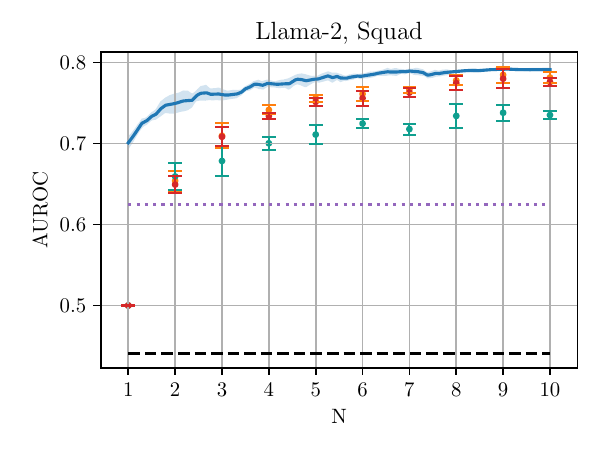}
    \\
    \footnotesize
    \raisebox{-0.15em}{\includegraphics[width=2em]{results/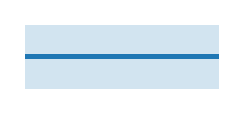}} Bayesian SE \hspace{0.3em}
    \raisebox{-0.15em}{\includegraphics[width=2em]{results/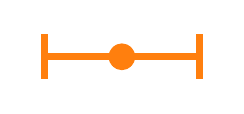}} Histogram \hspace{0.3em}
    \raisebox{-0.15em}{\includegraphics[width=2em]{results/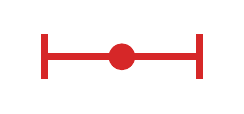}} Rescaled (heuristic) 
    \hspace{0.3em}
    \raisebox{-0.15em}{\includegraphics[width=2em]{results/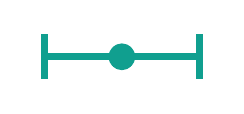}} Rescaled \hspace{0.3em}
    \raisebox{-0.15em}{\includegraphics[width=2em]{results/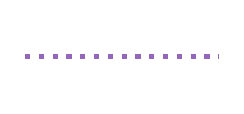}} Log Likelihood \hspace{0.3em}
    \raisebox{-0.15em}{\includegraphics[width=2em]{results/legend_marker_Ptrue.pdf}} P(True)
    \caption{Results in the adaptive budget setting (Llama 2). }
    \label{fig:res-l2-tqa-auroc}
\end{figure}

\begin{figure}[H]
    \centering
    \includegraphics[width=0.40\linewidth]{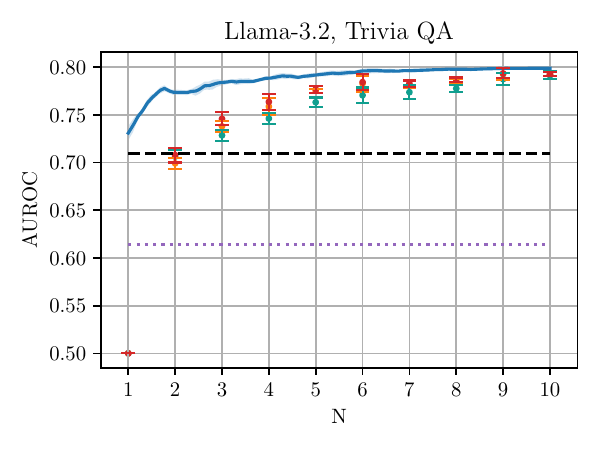}
    \includegraphics[width=0.40\linewidth]{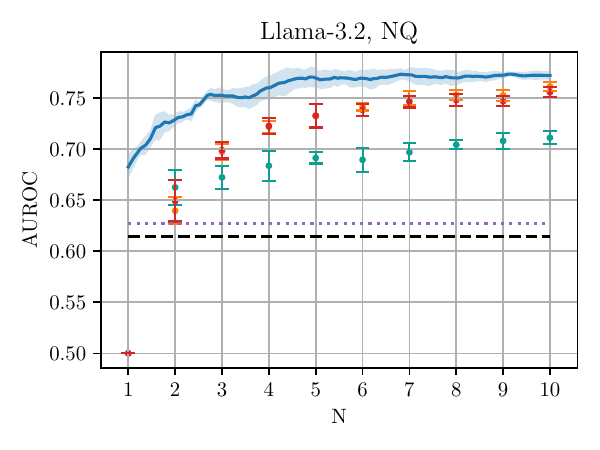}
    \\ \vspace{-1em}
    \includegraphics[width=0.40\linewidth]{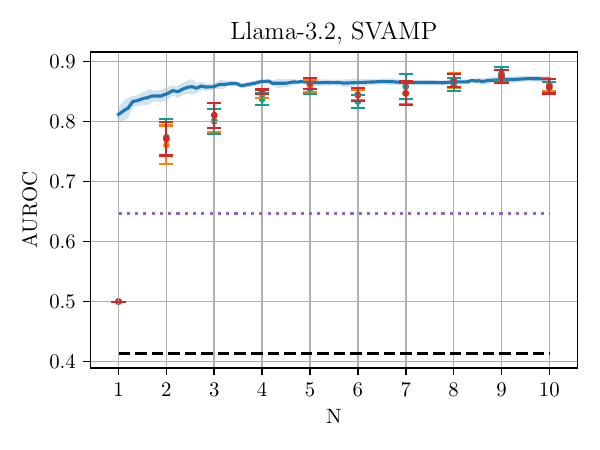}
    \includegraphics[width=0.40\linewidth]{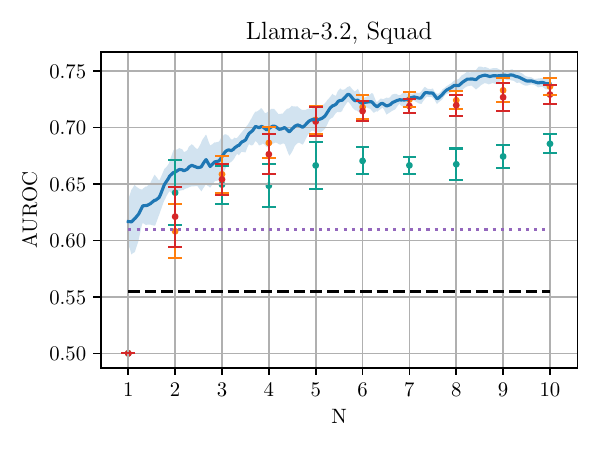}
    \\ \vspace{-1em}
    \footnotesize
    \raisebox{-0.15em}{\includegraphics[width=2em]{results/legend_marker_BE.pdf}} Bayesian SE \hspace{0.3em}
    \raisebox{-0.15em}{\includegraphics[width=2em]{results/legend_marker_SE-Histogram.pdf}} Histogram \hspace{0.3em}
    \raisebox{-0.15em}{\includegraphics[width=2em]{results/legend_marker_SE-Rescaled_h.pdf}} Rescaled (heuristic) 
    \hspace{0.3em}
    \raisebox{-0.15em}{\includegraphics[width=2em]{results/legend_marker_SE-Rescaled.pdf}} Rescaled \hspace{0.3em}
    \raisebox{-0.15em}{\includegraphics[width=2em]{results/legend_marker_LL.pdf}} Log Likelihood \hspace{0.3em}
    \raisebox{-0.15em}{\includegraphics[width=2em]{results/legend_marker_Ptrue.pdf}} P(True)
    \caption{Results in the adaptive budget setting (Llama 3). }
    \label{fig:res-l3-tqa-auroc}
\end{figure}

\begin{figure}[H]
    \centering
    \includegraphics[width=0.40\linewidth]{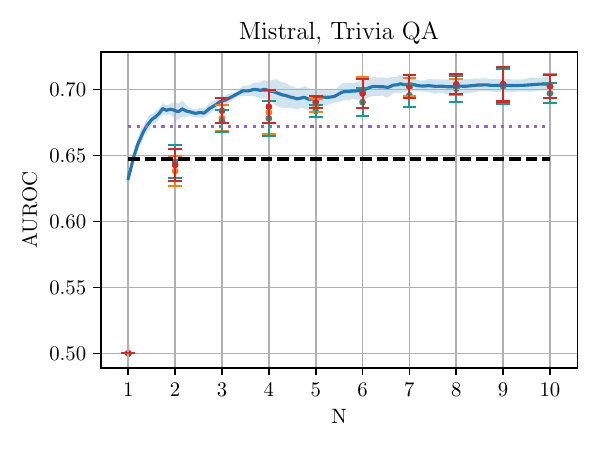}
    \includegraphics[width=0.40\linewidth]{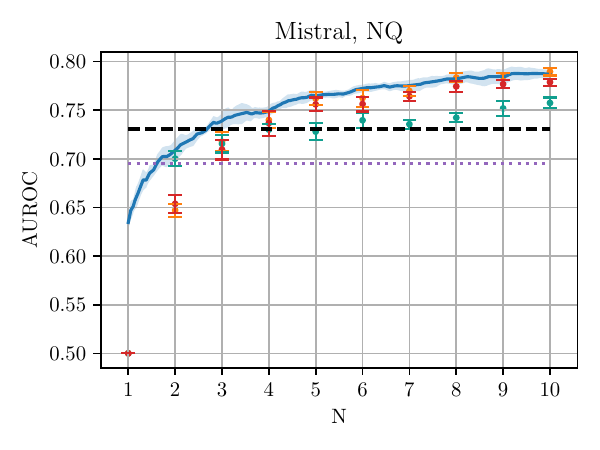}
    \\ \vspace{-1em}
    \includegraphics[width=0.40\linewidth]{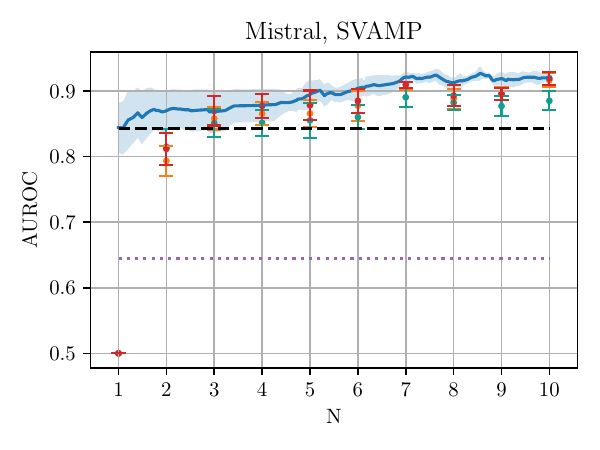}
    \includegraphics[width=0.40\linewidth]{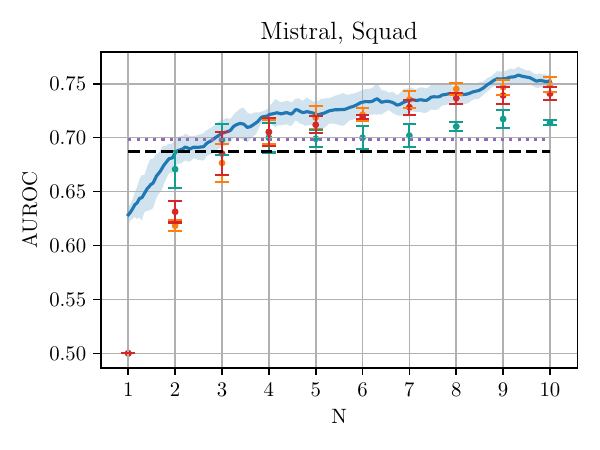}
    \\ \vspace{-1em}
    \footnotesize
    \raisebox{-0.15em}{\includegraphics[width=2em]{results/legend_marker_BE.pdf}} Bayesian SE \hspace{0.3em}
    \raisebox{-0.15em}{\includegraphics[width=2em]{results/legend_marker_SE-Histogram.pdf}} Histogram \hspace{0.3em}
    \raisebox{-0.15em}{\includegraphics[width=2em]{results/legend_marker_SE-Rescaled_h.pdf}} Rescaled (heuristic) 
    \hspace{0.3em}
    \raisebox{-0.15em}{\includegraphics[width=2em]{results/legend_marker_SE-Rescaled.pdf}} Rescaled \hspace{0.3em}
    \raisebox{-0.15em}{\includegraphics[width=2em]{results/legend_marker_LL.pdf}} Log Likelihood \hspace{0.3em}
    \raisebox{-0.15em}{\includegraphics[width=2em]{results/legend_marker_Ptrue.pdf}} P(True)
    \caption{Results in the adaptive budget setting (Mistral). }
    \label{fig:res-mistral-tqa-auroc}
\end{figure}



\section{Conclusions, Limitations and Perspectives}
We have described a new Bayesian estimator for measuring semantic entropy. The proposed estimator has systematically outperformed other semantic entropy baselines in several practical settings. Like any Bayesian method, our approach is constrained by the choice of the prior, where our choice can be viewed as limited in expressivity. This can be mitigated by using more complex hierarchical priors, for example along the lines proposed by \citet{nemenman2001entropy}. We leave this to further work. 

\bibliography{main}
\bibliographystyle{tmlr}

\newpage
\appendix

\section{Integral Computation}
\label{appdx-integrals}

\subsection{Mean and Variance of Expected Entropy under the Dirichlet Distribution} \label{appdx-integrals-closed}

Here, we derive the analytical expressions for the mean $\mathbb{E}[\mathbf{h}]$ and the second moment $\mathbb{E}[\mathbf{h}^2]$ of the entropy 
$$\mathbf{h} \triangleq \Ent[\mathbf{b}] = - \sum_{i=1}^K b_i \log b_i,$$ 
where the probability vector $\mathbf{b} = (b_1, \dots, b_K)$ follows a Dirichlet distribution $\mathbf{b} \sim \text{Dir}(\boldsymbol{\alpha})$ with parameters $\boldsymbol{\alpha} = (\alpha_1, \dots, \alpha_K)$. Let $\alpha_0 = \sum_{i=1}^K \alpha_i$. The results involving digamma ($\psi$) and trigamma ($\psi_1$) functions are based on standard properties of the Dirichlet distribution \citep{wolpert1994estimating, hausser2009entropy}.

\paragraph{Mean Entropy $\mathbb{E}[\mathbf{h}]$}
Using the linearity of expectation and the known expectation $\mathbb{E}[b_i \log b_i]$ for a Dirichlet distribution:
\begin{align*}
\mathbb{E}[\mathbf{h}] &= \mathbb{E}\left[ - \sum_{i=1}^K b_i \log b_i \right] = - \sum_{i=1}^K \mathbb{E}[b_i \log b_i] \\
&= - \sum_{i=1}^K \frac{\alpha_i}{\alpha_0} \left( \psi(\alpha_i + 1) - \psi(\alpha_0 + 1) \right) \\
&= \psi(\alpha_0 + 1) \sum_{i=1}^K \frac{\alpha_i}{\alpha_0} - \sum_{i=1}^K \frac{\alpha_i}{\alpha_0} \psi(\alpha_i + 1) \\
&= \psi(\alpha_0 + 1) - \sum_{i=1}^K \frac{\alpha_i}{\alpha_0} \psi(\alpha_i + 1)
\end{align*}
This formula corresponds to the one used to compute $\mathbb{E}[\rvh]$ in Section~\ref{sec-estimator-basic} (after substituting the appropriate parameters $\alpha + \rvc_j$).

\paragraph{Second Moment $\mathbb{E}[\mathbf{h}^2]$}
We start by expanding the square of the entropy:
\begin{align*}
\mathbf{h}^2 &= \left( - \sum_{i=1}^K b_i \log b_i \right)^2 = \sum_{i=1}^K \sum_{j=1}^K ( b_i b_j \log b_i \log b_j )
\end{align*}
Applying the expectation:
\begin{align*}
\mathbb{E}[\mathbf{h}^2] &= \mathbb{E} \left[ \sum_{i=1}^K \sum_{j=1}^K ( b_i b_j \log b_i \log b_j ) \right] \\
&= \sum_{i=1}^K \sum_{j=1}^K \mathbb{E} [ b_i b_j \log b_i \log b_j ] \\
&= \sum_{i=1}^K \mathbb{E}[b_i^2 (\log b_i)^2] + \sum_{i \neq j} \mathbb{E}[b_i b_j \log b_i \log b_j]
\end{align*}
The calculation requires the expectations $\mathbb{E}[b_i^2 (\log b_i)^2]$ and $\mathbb{E}[b_i b_j \log b_i \log b_j]$ for $i \neq j$. 
Before the derivation of these quantities, let us derive some useful lemmas.

\begin{lemma}
    \label{lem:der-log}
    Let $\mathbf{b} \sim \text{Dir}(\boldsymbol{\alpha})$, where $\boldsymbol{\alpha} = (\alpha_1, \dots, \alpha_K)$. Let $i \in \{1, \dots, K\}$. Then:
\[
\frac{\partial}{\partial \alpha_i} \log p(\mathbf{b} | \boldsymbol{\alpha}) = \psi(\alpha_0) - \psi(\alpha_i) + \log b_i ,
\]    
where $\psi(x)$ is the digamma function.
\end{lemma}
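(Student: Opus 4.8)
The plan is to compute the derivative directly from the explicit form of the Dirichlet log-density. First I would write down the Dirichlet PDF as
\[
p(\mathbf{b} \mid \boldsymbol{\alpha}) = \frac{\Gamma(\alpha_0)}{\prod_{k=1}^K \Gamma(\alpha_k)} \prod_{k=1}^K b_k^{\alpha_k - 1},
\]
where $\alpha_0 = \sum_{k=1}^K \alpha_k$, and then take logarithms to convert the products into sums:
\[
\log p(\mathbf{b} \mid \boldsymbol{\alpha}) = \log \Gamma(\alpha_0) - \sum_{k=1}^K \log \Gamma(\alpha_k) + \sum_{k=1}^K (\alpha_k - 1) \log b_k.
\]
This separates the expression into a normalization part involving only Gamma functions and a data part that is linear in the $\log b_k$, which makes the subsequent differentiation transparent.

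Next I would differentiate term by term with respect to $\alpha_i$, treating $\mathbf{b}$ as fixed. The key point to handle carefully is that the aggregate $\alpha_0$ depends on $\alpha_i$, so the chain rule gives $\partial \alpha_0 / \partial \alpha_i = 1$; the $\log \Gamma(\alpha_0)$ term therefore contributes $\psi(\alpha_0)$, using the defining identity $\frac{d}{dx}\log\Gamma(x) = \psi(x)$. In the sum $\sum_k \log \Gamma(\alpha_k)$, only the $k = i$ summand survives differentiation, contributing $-\psi(\alpha_i)$. Finally, in the data part $\sum_k (\alpha_k - 1)\log b_k$, only the $k=i$ term depends on $\alpha_i$, and its derivative is simply $\log b_i$. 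Collecting these three contributions yields
\[
\frac{\partial}{\partial \alpha_i} \log p(\mathbf{b} \mid \boldsymbol{\alpha}) = \psi(\alpha_0) - \psi(\alpha_i) + \log b_i,
\]
which is exactly the claimed identity.

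This lemma is essentially a routine log-derivative (score-function) computation, so I do not anticipate a genuine obstacle; the only subtlety worth flagging explicitly is the dependence of $\alpha_0$ on each individual $\alpha_i$, since overlooking it would drop the $\psi(\alpha_0)$ term. It is worth noting why this quantity is useful downstream: it is precisely the score of the Dirichlet, and the identity will let later derivations extract expectations of the form $\mathbb{E}[\log b_i]$ and their higher-order analogues (needed for $\mathbb{E}[\mathbf{h}^2]$) by differentiating under the integral sign rather than evaluating each awkward log-moment integral from scratch.
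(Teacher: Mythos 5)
Your proof is correct and is essentially identical to the paper's: both write out the Dirichlet log-density as $\log \Gamma(\alpha_0) - \sum_k \log \Gamma(\alpha_k) + \sum_k (\alpha_k - 1)\log b_k$ and differentiate term by term, applying the chain rule through $\alpha_0$ to obtain the $\psi(\alpha_0)$ contribution. Your added remarks on the role of $\partial \alpha_0/\partial \alpha_i = 1$ and the downstream use of the score function match the paper's usage in Lemmas \ref{lem:second-moment} and \ref{lem:second-moment-diff}.
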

\begin{proof}
First, let us re-write this quantity:   
\[
\frac{\partial}{\partial \alpha_i} \log p(\mathbf{b} | \boldsymbol{\alpha}) = \frac{\partial}{\partial \alpha_i} \left( \log \Gamma(\alpha_0) - \sum_{k=1}^K \log \Gamma(\alpha_k) + \sum_{k=1}^K (\alpha_k - 1) \log b_k \right)
\]
Using the chain rule: $$\frac{\partial}{\partial \alpha_i} \log \Gamma(\alpha_0) = \frac{d \log \Gamma(\alpha_0)}{d \alpha_0} \frac{\partial \alpha_0}{\partial \alpha_i} = \psi(\alpha_0) \cdot 1 = \psi(\alpha_0).$$

Therefore:
\begin{equation*}
    \begin{aligned}
        \frac{\partial}{\partial \alpha_i} \log p(\mathbf{b} | \boldsymbol{\alpha})& = \psi(\alpha_0) - \frac{d \log \Gamma(\alpha_i)}{d \alpha_i} + \log b_i \\
        & = \psi(\alpha_0) - \psi(\alpha_i) + \log b_i.
    \end{aligned}
\end{equation*}

\end{proof}

\begin{lemma}
\label{lem:second-moment}
    Let $\mathbf{b} \sim \text{Dir}(\boldsymbol{\alpha})$, where $\boldsymbol{\alpha} = (\alpha_1, \dots, \alpha_K)$. Let $i \in \{1, \dots, K\}$. Then:
    \[
    \mathbb{E}[(\log b_i)^2] = \bigl(\psi_1(\alpha_i) - \psi_1(\alpha_0)\bigr) + \bigl(\psi(\alpha_i) - \psi(\alpha_0)\bigr)^2,
    \]
    where $\psi(x)$ is the digamma function.
\end{lemma}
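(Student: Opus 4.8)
The plan is to reduce everything to the first two derivatives of the Dirichlet log-normalizer, exploiting the score function already computed in Lemma~\ref{lem:der-log}. Write the score as $s_i \triangleq \frac{\partial}{\partial\alpha_i}\log p(\mathbf{b}|\boldsymbol{\alpha}) = \log b_i + \psi(\alpha_0) - \psi(\alpha_i)$. My first step is to record the two standard moment identities obtained by differentiating the normalization constraint $\int p(\mathbf{b}|\boldsymbol{\alpha})\, d\mathbf{b} = 1$ under the integral sign. Differentiating once gives $\mathbb{E}[s_i] = 0$, which immediately yields $\mathbb{E}[\log b_i] = \psi(\alpha_i) - \psi(\alpha_0)$. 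Differentiating a second time and using $\frac{\partial^2}{\partial\alpha_i^2} p = p\,s_i^2 + p\,\frac{\partial s_i}{\partial\alpha_i}$ produces the Fisher-information identity $\mathbb{E}[s_i^2] = -\,\mathbb{E}\!\left[\frac{\partial s_i}{\partial\alpha_i}\right]$.

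Next I evaluate the two pieces. Because $\mathbb{E}[s_i]=0$, the score is exactly the centered log-probability, $s_i = \log b_i - \mathbb{E}[\log b_i]$, so $\mathbb{E}[s_i^2] = \Var[\log b_i]$. On the other side, differentiating the explicit form of $s_i$ from Lemma~\ref{lem:der-log} term by term gives $\frac{\partial s_i}{\partial\alpha_i} = \psi_1(\alpha_0) - \psi_1(\alpha_i)$, where I use $\frac{\partial}{\partial\alpha_i}\psi(\alpha_0)=\psi_1(\alpha_0)$ since $\alpha_0=\sum_k\alpha_k$; this quantity is deterministic, so its expectation is itself. Combining with the Fisher identity yields $\Var[\log b_i] = \psi_1(\alpha_i) - \psi_1(\alpha_0)$.

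Finally, I assemble the second moment via $\mathbb{E}[(\log b_i)^2] = \Var[\log b_i] + (\mathbb{E}[\log b_i])^2$, substituting the two expressions just derived to obtain $\bigl(\psi_1(\alpha_i)-\psi_1(\alpha_0)\bigr) + \bigl(\psi(\alpha_i)-\psi(\alpha_0)\bigr)^2$, which is the claim. An equivalent route bypasses the score entirely: since the marginal of $b_i$ is $\mathrm{Beta}(\alpha_i,\alpha_0-\alpha_i)$, one may differentiate the log Beta-normalizer $\log\Gamma(\alpha_i)+\log\Gamma(\alpha_0-\alpha_i)-\log\Gamma(\alpha_0)$ twice in $\alpha_i$, reading off the mean and variance of $\log b_i$ as its first and second derivatives; I would use this as a cross-check.

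The step requiring the most care is the justification for differentiating under the integral sign, which underlies both moment identities. The potential difficulty is integrability near the boundary of the simplex when some $\alpha_i<1$, where the density is unbounded; however, $\log b_i$ and $(\log b_i)^2$ remain integrable against $b_i^{\alpha_i-1}$ there, and the dominating functions needed for a dominated-convergence argument can be built uniformly on a small neighborhood of each fixed $\boldsymbol{\alpha}$ with all entries positive. Once this regularity is granted, the remainder is the routine digamma/trigamma bookkeeping above.
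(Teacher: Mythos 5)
Your proof is correct and is essentially the paper's own argument: the paper likewise differentiates the identity $\mathbb{E}\bigl[\psi(\alpha_0)-\psi(\alpha_i)+\log b_i\bigr]=0$ under the integral sign using the score from Lemma~\ref{lem:der-log}, which is exactly your Fisher-information identity $\mathbb{E}[s_i^2]=-\mathbb{E}\bigl[\partial s_i/\partial\alpha_i\bigr]$ recast in different language, and your final assembly via $\mathbb{E}[(\log b_i)^2]=\Var[\log b_i]+(\mathbb{E}[\log b_i])^2$ matches the paper's expansion with $C_i=\psi(\alpha_0)-\psi(\alpha_i)$. Your added remarks on the Beta marginal cross-check and the dominated-convergence justification for differentiating under the integral go slightly beyond what the paper spells out, but the core route is the same.
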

\begin{proof}
    It is known that:
    \begin{equation*}
        \label{eq:exp-log-dir}
        \mathbb{E}[\log b_i] = \psi(\alpha_i) - \psi(\alpha_0).
    \end{equation*}

    It follows that:
    $$\int p(\mathbf{b} | \boldsymbol{\alpha}) \left[ \psi(\alpha_0) - \psi(\alpha_i) + \log b_i \right] d\mathbf{b} =  \psi(\alpha_0) - \psi(\alpha_i) + \mathbb{E}[\log b_i] = 0 .$$

    If we apply a derivative to this quantity, it must be 0, since it is a constant:
\[
\frac{\partial}{\partial \alpha_i} \int p(\mathbf{b} | \boldsymbol{\alpha}) \left[ \psi(\alpha_0) - \psi(\alpha_i) + \log b_i \right] d\mathbf{b} = 0
\]

For this integral, we can apply the Leibniz integral rule for differentiation under the integral sign. Applying the product rule for differentiation under the integral sign we get:
\[
\int \left(\frac{\partial p(\mathbf{b} | \boldsymbol{\alpha})}{\partial \alpha_i}\right) \left[ \psi(\alpha_0) - \psi(\alpha_i) + \log b_i \right] d\mathbf{b} + \int p(\mathbf{b} | \boldsymbol{\alpha}) \frac{\partial}{\partial \alpha_i}\left[ \psi(\alpha_0) - \psi(\alpha_i) + \log b_i \right] d\mathbf{b} = 0
\]
By using Lemma \ref{lem:der-log} and the log-derivative trick, we can substitute $\frac{\partial p}{\partial \alpha_i} = p \frac{\partial \log p}{\partial \alpha_i} =  p \cdot (\psi(\alpha_0) - \psi(\alpha_i) + \log b_i)$:
\[
\int p(\mathbf{b} | \boldsymbol{\alpha}) \left( \psi(\alpha_0) - \psi(\alpha_i) + \log b_i \right)^2 d\mathbf{b} + \int p(\mathbf{b} | \boldsymbol{\alpha}) \left[ \psi_1(\alpha_0) - \psi_1(\alpha_i) \right] d\mathbf{b} = 0,
\]
where $\frac{\partial}{\partial \alpha_i}\psi(\alpha_0)
= \psi_1(\alpha_0) \frac{\partial \alpha_0}{\partial \alpha_i} 
= \psi_1(\alpha_0)$, $\frac{\partial}{\partial \alpha_i}\psi(\alpha_i) = \psi_1(\alpha_i)$, and $\frac{\partial}{\partial \alpha_i}\log b_i = 0$.

We can further simplify this expression by using the expected value definition:
\[
\underbrace{\mathbb{E}[(\psi(\alpha_0) - \psi(\alpha_i) + \log b_i)^2]}_{\text{(1)}}
+
\underbrace{\psi_1(\alpha_0) - \psi_1(\alpha_i)}_{(2)}
= 0 \ .
\]
Let $C_i = \psi(\alpha_0) - \psi(\alpha_i) = -\mathbb{E}[\log b_i]$. Term (1) becomes:
\[
\mathbb{E}[(C_i + \log b_i)^2] = \mathbb{E}[C_i^2 + 2 C_i \log b_i + (\log b_i)^2]
\]
\[
= C_i^2 + 2 C_i \mathbb{E}[\log b_i] + \mathbb{E}[(\log b_i)^2]
\]
Substitute $\mathbb{E}[\log b_i] = -C_i$:
\[
= C_i^2 + 2 C_i (-C_i) + \mathbb{E}[(\log b_i)^2] = -C_i^2 + \mathbb{E}[(\log b_i)^2]
\]
Substituting this back into the equation derived from differentiation:
\[
(-C_i^2 + \mathbb{E}[(\log b_i)^2]) + \psi_1(\alpha_0) - \psi_1(\alpha_i) = 0
\]
\[
\mathbb{E}[(\log b_i)^2] = C_i^2 - \psi_1(\alpha_0) + \psi_1(\alpha_i).
\]
Substitute $C_i = \psi(\alpha_0) - \psi(\alpha_i)$:
\[
\mathbb{E}[(\log b_i)^2] = (\psi(\alpha_0) - \psi(\alpha_i))^2 + \psi_1(\alpha_i) - \psi_1(\alpha_0).
\]
Rearranging gives the desired result:
\[
\mathbb{E}[(\log b_i)^2] = (\psi_1(\alpha_i) - \psi_1(\alpha_0)) + (\psi(\alpha_i) - \psi(\alpha_0))^2.
\]
    
\end{proof}

\begin{lemma}
\label{lem:second-moment-diff}
    Let $\mathbf{b} \sim \text{Dir}(\boldsymbol{\alpha})$, where $\boldsymbol{\alpha} = (\alpha_1, \dots, \alpha_K)$. Let $i, j \in \{1, \dots, K\}$ and $i \neq j$. Then:
    \[
    \mathbb{E}[\log b_i \log b_j] = -\psi_1(\alpha_0) + \bigl(\psi(\alpha_i) - \psi(\alpha_0)\bigr)\bigl(\psi(\alpha_j) - \psi(\alpha_0)\bigr), 
    \]
    where $\psi(x)$ is the digamma function and $\psi_1(x) = \frac{d}{dx}\psi(x)$ is the trigamma function.
\end{lemma}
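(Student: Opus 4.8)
The plan is to follow the same differentiation-under-the-integral strategy used in the proof of Lemma \ref{lem:second-moment}, but now to differentiate with respect to the \emph{other} parameter $\alpha_j$ rather than $\alpha_i$. I would start from the zero-mean score identity $\mathbb{E}[\psi(\alpha_0) - \psi(\alpha_i) + \log b_i] = 0$, which follows immediately from $\mathbb{E}[\log b_i] = \psi(\alpha_i) - \psi(\alpha_0)$, and write it as an integral $\int p(\mathbf{b}|\boldsymbol{\alpha})[\psi(\alpha_0) - \psi(\alpha_i) + \log b_i]\, d\mathbf{b} = 0$. Since the right-hand side is identically zero, its derivative with respect to $\alpha_j$ must also vanish.

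Applying the Leibniz rule together with the product rule, this derivative splits into two integrals. In the first I substitute $\frac{\partial p}{\partial \alpha_j} = p \cdot (\psi(\alpha_0) - \psi(\alpha_j) + \log b_j)$ using Lemma \ref{lem:der-log} and the log-derivative trick. In the second I differentiate the bracket $\psi(\alpha_0) - \psi(\alpha_i) + \log b_i$ with respect to $\alpha_j$; here the crucial point, and the only place where the hypothesis $i \neq j$ enters, is that $\frac{\partial}{\partial \alpha_j}\psi(\alpha_i) = 0$ and $\frac{\partial}{\partial \alpha_j}\log b_i = 0$, leaving only $\frac{\partial}{\partial \alpha_j}\psi(\alpha_0) = \psi_1(\alpha_0)$ because $\alpha_0 = \sum_k \alpha_k$. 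Thus the second integral contributes exactly $\psi_1(\alpha_0)$.

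Writing $C_i = \psi(\alpha_0) - \psi(\alpha_i) = -\mathbb{E}[\log b_i]$ and analogously $C_j$, the first integral becomes $\mathbb{E}[(C_i + \log b_i)(C_j + \log b_j)]$. Expanding and using $\mathbb{E}[\log b_i] = -C_i$ and $\mathbb{E}[\log b_j] = -C_j$, the two cross terms each equal $-C_i C_j$ and combine with $C_i C_j$ to simplify the expression to $\mathbb{E}[\log b_i \log b_j] - C_i C_j$. Setting the sum of the two integrals equal to zero and rearranging then gives $\mathbb{E}[\log b_i \log b_j] = C_i C_j - \psi_1(\alpha_0)$, which is precisely the claimed formula once I rewrite $C_i C_j = (\psi(\alpha_i) - \psi(\alpha_0))(\psi(\alpha_j) - \psi(\alpha_0))$.

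The main thing to get right is the bookkeeping of which partial derivatives survive: the entire difference between this lemma and Lemma \ref{lem:second-moment} stems from differentiating the score for coordinate $i$ against a \emph{distinct} coordinate $j$, so that only the shared dependence on $\alpha_0$ remains and produces the single $-\psi_1(\alpha_0)$ term, rather than the $\psi_1(\alpha_i) - \psi_1(\alpha_0)$ difference seen in the diagonal case. As in the previous lemma, the interchange of differentiation and integration is justified because the Dirichlet density is smooth in its parameters on the open simplex and the relevant log-moments are finite, so I would invoke the Leibniz rule without further comment.
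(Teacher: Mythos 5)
Your proposal is correct and follows essentially the same route as the paper's own proof: differentiating the zero-mean score identity for coordinate $i$ with respect to the distinct parameter $\alpha_j$, using Lemma~\ref{lem:der-log} with the log-derivative trick, and noting that only the shared $\psi(\alpha_0)$ term survives differentiation, yielding the single $\psi_1(\alpha_0)$ contribution. The subsequent expansion via $C_i$ and $C_j$ and the final rearrangement match the paper's argument exactly.
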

\begin{proof}
     From Lemma \ref{lem:der-log}, we know that, for a given $i \in \{1, \dots , K\}$, the following quantity is 0: $$\int p(\mathbf{b} | \boldsymbol{\alpha}) \left[ \psi(\alpha_0) - \psi(\alpha_i) + \log b_i \right] d\mathbf{b} = 0 .$$ 
     Now, we differentiate this quantity with respect to $\alpha_j$, where $j \neq i$:
\[
\frac{\partial}{\partial \alpha_j} \int p(\mathbf{b} | \boldsymbol{\alpha}) \left[ \psi(\alpha_0) - \psi(\alpha_i) + \log b_i \right] d\mathbf{b} = 0.
\]

For this integral, we can apply the Leibniz integral rule for differentiation under the integral sign. Applying the product rule for differentiation under the integral sign we get:
\[
\int \left(\frac{\partial p(\mathbf{b} | \boldsymbol{\alpha})}{\partial \alpha_j}\right) \left[ \psi(\alpha_0) - \psi(\alpha_i) + \log b_i \right] d\mathbf{b} + \int p(\mathbf{b} | \boldsymbol{\alpha}) \frac{\partial}{\partial \alpha_j}\left[ \psi(\alpha_0) - \psi(\alpha_i) + \log b_i \right] d\mathbf{b} = 0
\]
Due to Lemma \ref{lem:der-log} and the log-derivative trick, we can substitute $\frac{\partial p}{\partial \alpha_j} = p \frac{\partial \log p}{\partial \alpha_j} =  p \cdot (\psi(\alpha_0) - \psi(\alpha_j) + \log b_j)$:
\[
\int p(\mathbf{b} | \boldsymbol{\alpha}) \left( \psi(\alpha_0) - \psi(\alpha_j) + \log b_j \right) \left( \psi(\alpha_0) - \psi(\alpha_i) + \log b_i \right) d\mathbf{b} + \int p(\mathbf{b} | \boldsymbol{\alpha}) \left[ \psi_1(\alpha_0) \right] d\mathbf{b} = 0,
\]
where $\frac{\partial}{\partial \alpha_j}\psi(\alpha_0) = \psi_1(\alpha_0)$, $\frac{\partial}{\partial \alpha_j}\psi(\alpha_i) = 0$ since $j \neq i$, and $\frac{\partial}{\partial \alpha_j}\log b_i = 0$.

We can re-write the previous quantity by using the expected value definition:
\[
\underbrace{\mathbb{E}[(\psi(\alpha_0) - \psi(\alpha_j) + \log b_j)(\psi(\alpha_0) - \psi(\alpha_i) + \log b_i)] }_{(1)}
+ 
\underbrace{\psi_1(\alpha_0)}_{(2)}
 = 0
\]

Let $C_i = \psi(\alpha_0) - \psi(\alpha_i) = -\mathbb{E}[\log b_i]$ and $C_j = \psi(\alpha_0) - \psi(\alpha_j) = -\mathbb{E}[\log b_j]$. Term (1) becomes:

\begin{equation*}
    \begin{aligned}
        \mathbb{E}[(C_j + \log b_j)(C_i + \log b_i)] & = \mathbb{E}[C_i C_j + C_i \log b_j + C_j \log b_i + \log b_i \log b_j] \\
        & = C_i C_j + C_i \mathbb{E}[\log b_j] + C_j \mathbb{E}[\log b_i] + \mathbb{E}[\log b_i \log b_j] \\
        & = C_i C_j + C_i (-C_j) + C_j (-C_i) + \mathbb{E}[\log b_i \log b_j] \\
        & = C_i C_j - C_i C_j - C_j C_i + \mathbb{E}[\log b_i \log b_j] = -C_i C_j + \mathbb{E}[\log b_i \log b_j].
    \end{aligned}
\end{equation*}

Substituting this back into the equation derived from differentiation:
\[
(-C_i C_j + \mathbb{E}[\log b_i \log b_j]) + \psi_1(\alpha_0) = 0
\]
\[
\mathbb{E}[\log b_i \log b_j] = C_i C_j - \psi_1(\alpha_0)
\]
Substitute $C_i = \psi(\alpha_0) - \psi(\alpha_i)$ and $C_j = \psi(\alpha_0) - \psi(\alpha_j)$:
\[
\mathbb{E}[\log b_i \log b_j] = (\psi(\alpha_0) - \psi(\alpha_i))(\psi(\alpha_0) - \psi(\alpha_j)) - \psi_1(\alpha_0).
\]
Rearranging gives the desired result:
\[
\mathbb{E}[\log b_i \log b_j] = -\psi_1(\alpha_0) + (\psi(\alpha_i) - \psi(\alpha_0))(\psi(\alpha_j) - \psi(\alpha_0)) \quad \text{for } i \neq j.
\]
\end{proof}

\begin{lemma}
\label{lem:shift}
    For $\mathbf{b} \sim \text{Dir}(\boldsymbol{\alpha})$, 
\[
\mathbb{E}[b_{i_1} \dots b_{i_n} f(\mathbf{b})] = \mathbb{E}[b_{i_1} \dots b_{i_n}] \cdot \mathbb{E}'[f(\mathbf{b}')]
\]
where $i_1, \dots, i_n \in \{1, \dots, K\}$ are $n$ indices, $\mathbf{b}' \sim \text{Dir}(\boldsymbol{\alpha} + \sum_{k=1}^n \mathbf{e}_{i_k})$, and $\mathbf{e}_k$ is the $k$-th standard basis vector.
\end{lemma}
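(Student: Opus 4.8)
The plan is to exploit the fact that the Dirichlet density is, up to its normalizing constant, a monomial in the coordinates $b_k$, so that multiplying it by $b_{i_1}\cdots b_{i_n}$ simply increments the corresponding shape parameters. Writing the density explicitly as
\[
p(\mathbf{b}\mid\boldsymbol{\alpha}) = \frac{\Gamma(\alpha_0)}{\prod_{k=1}^K \Gamma(\alpha_k)} \prod_{k=1}^K b_k^{\alpha_k - 1},
\]
and setting $\boldsymbol{\beta} \triangleq \boldsymbol{\alpha} + \sum_{k=1}^n \mathbf{e}_{i_k}$ (so that $\beta_j$ equals $\alpha_j$ plus the multiplicity with which $j$ appears among $i_1,\dots,i_n$, and $\beta_0 \triangleq \sum_k \beta_k = \alpha_0 + n$), I would observe the elementary identity
\[
b_{i_1}\cdots b_{i_n}\, \prod_{k=1}^K b_k^{\alpha_k - 1} = \prod_{k=1}^K b_k^{\beta_k - 1}.
\]

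First I would use this identity to rewrite the weighted density as a rescaled Dirichlet density with parameters $\boldsymbol{\beta}$,
\[
b_{i_1}\cdots b_{i_n}\, p(\mathbf{b}\mid\boldsymbol{\alpha}) = Z \cdot p(\mathbf{b}\mid\boldsymbol{\beta}), \qquad Z \triangleq \frac{\Gamma(\alpha_0)\,\prod_{k=1}^K \Gamma(\beta_k)}{\Gamma(\beta_0)\,\prod_{k=1}^K \Gamma(\alpha_k)},
\]
where $Z$ is precisely the ratio of the two normalizing constants and the supports (the simplex $\Delta^K$) coincide, so that no boundary terms arise. Next I would integrate both sides against $f$:
\[
\mathbb{E}[b_{i_1}\cdots b_{i_n} f(\mathbf{b})] = \int_{\Delta^K} f(\mathbf{b})\, b_{i_1}\cdots b_{i_n}\, p(\mathbf{b}\mid\boldsymbol{\alpha})\, d\mathbf{b} = Z \int_{\Delta^K} f(\mathbf{b})\, p(\mathbf{b}\mid\boldsymbol{\beta})\, d\mathbf{b} = Z \cdot \mathbb{E}'[f(\mathbf{b}')].
\]

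Finally, to identify the prefactor $Z$ with the claimed moment, I would specialize to $f \equiv 1$: since $\mathbb{E}'[1] = 1$, the same computation gives $\mathbb{E}[b_{i_1}\cdots b_{i_n}] = Z$, which is just the standard Dirichlet monomial-moment formula. Substituting $Z = \mathbb{E}[b_{i_1}\cdots b_{i_n}]$ into the previous display yields the stated equality. There is no genuine obstacle here; the only points requiring care are the bookkeeping for repeated indices, which is handled automatically by defining the shift as $\sum_{k=1}^n \mathbf{e}_{i_k}$ so that repetitions accumulate correctly in the shape parameters, and noting that the change of measure leaves the integration domain unchanged, so that $f$ need only be assumed integrable against the shifted Dirichlet for the manipulation to be valid.
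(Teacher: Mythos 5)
Your proposal is correct and follows essentially the same route as the paper's proof: absorb the monomial $b_{i_1}\cdots b_{i_n}$ into the Dirichlet density by incrementing the shape parameters, then identify the resulting ratio of normalizing constants with the monomial moment. The only (cosmetic) difference is that you derive $Z = \mathbb{E}[b_{i_1}\cdots b_{i_n}]$ self-containedly by specializing to $f \equiv 1$, whereas the paper cites the standard moment formula $\frac{B(\boldsymbol{\alpha}')}{B(\boldsymbol{\alpha})} = \mathbb{E}[b_{i_1}\cdots b_{i_n}]$ from the literature.
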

\begin{proof}
First, let us re-write the term $b_{i_1} \dots b_{i_n}$ on the counts $c_k$:
\[
b_{i_1} \dots b_{i_n} = \prod_{j=1}^n b_{i_j} = \prod_{k=1}^K b_k^{c_k} ,
\]
where $c_k$ is the count of the indices equals to $k$: $c_k = \sum_{j=1}^n \mathbf{1}(i_j = k)$.
Now, we can substitute this into the expected value:
\begin{equation*}
    \begin{aligned}
        \mathbb{E}[b_{i_1} \dots b_{i_n} f(\mathbf{b})] & = \frac{1}{B(\boldsymbol{\alpha})} \int_{\mathcal{S}^K} f(\mathbf{b}) \left( \prod_{k=1}^K b_k^{c_k} \right) \left( \prod_{k=1}^K b_k^{\alpha_k - 1} \right) d\mathbf{b} \\
        & = \frac{1}{B(\boldsymbol{\alpha})} \int_{\mathcal{S}^K} f(\mathbf{b}) \prod_{k=1}^K b_k^{\alpha_k + c_k - 1} d\mathbf{b}
    \end{aligned}
\end{equation*}

Define $\alpha'_k = \alpha_k + c_k$ and $\boldsymbol{\alpha}' = (\alpha'_1, \dots, \alpha'_K)$. Let us rewrite the expression by multiplying and dividing by the normalization constant $B(\boldsymbol{\alpha}')$ for the $\text{Dir}(\boldsymbol{\alpha}')$ distribution:
\begin{equation*}
    \begin{aligned}
        \mathbb{E}[b_{i_1} \dots b_{i_n} f(\mathbf{b})] & = \frac{B(\boldsymbol{\alpha}')}{B(\boldsymbol{\alpha})} \int_{\mathcal{S}^K} f(\mathbf{b}) \frac{1}{B(\boldsymbol{\alpha}')} \prod_{k=1}^K b_k^{\alpha'_k - 1} d\mathbf{b} \\
        & = \frac{B(\boldsymbol{\alpha}')}{B(\boldsymbol{\alpha})} \int_{\mathcal{S}^K} f(\mathbf{b}) p(\mathbf{b} | \boldsymbol{\alpha}') d\mathbf{b},
    \end{aligned}
\end{equation*}
where $p(\mathbf{b} | \boldsymbol{\alpha}')$ is the PDF of the $\text{Dir}(\boldsymbol{\alpha}')$ distribution. By substituting the known fact that $\frac{B(\boldsymbol{\alpha}')}{B(\boldsymbol{\alpha})} = \mathbb{E}[b_{i_1} \dots b_{i_n}]$ \citep{balakrishnan2004primer, dirichletmoments}, we get the desired result.

\end{proof}

Now, let us apply Lemma \ref{lem:shift} for the $i=j$ term:
\[
\mathbb{E}[b_i^2 (\log b_i)^2] = \mathbb{E}[b_i^2] \cdot \mathbb{E}''[(\log b_i'')^2]
\]
where $\mathbf{b}'' \sim \text{Dir}(\boldsymbol{\alpha} + 2\mathbf{e}_i)$. Now we apply Lemma \ref{lem:second-moment}:
\begin{align*}
\mathbb{E}''[(\log b_i'')^2] &= \bigl(\psi_1(\alpha_i + 2) - \psi_1(\alpha_0 + 2)\bigr) + \bigl(\psi(\alpha_i + 2) - \psi(\alpha_0 + 2)\bigr)^2
\end{align*}
Since $\mathbb{E}[b_i^2] = \frac{\alpha_i(\alpha_i+1)}{\alpha_0(\alpha_0+1)}$, we get:
\begin{align*}
\mathbb{E}[b_i^2 (\log b_i)^2] &= \frac{\alpha_i(\alpha_i+1)}{\alpha_0(\alpha_0+1)} \left\{ \bigl(\psi_1(\alpha_i + 2) - \psi_1(\alpha_0 + 2)\bigr) + \bigl(\psi(\alpha_i + 2) - \psi(\alpha_0 + 2)\bigr)^2 \right\}
\end{align*}

Applying Lemma \ref{lem:shift} for the $i \neq j$ term:

\[
\mathbb{E}[b_i b_j \log b_i \log b_j] = \mathbb{E}[b_i b_j] \cdot \mathbb{E}''[\log b_i'' \log b_j'']
\]

where $\mathbf{b}'' \sim \text{Dir}(\boldsymbol{\alpha} + \mathbf{e}_i + \mathbf{e}_j)$. Now, we apply Lemma \ref{lem:second-moment-diff}:
\begin{align*}
\mathbb{E}''[\log b_i'' \log b_j''] &= -\psi_1(\alpha_0 + 2) + \bigl(\psi(\alpha_i + 1) - \psi(\alpha_0 + 2)\bigr)\bigl(\psi(\alpha_j + 1) - \psi(\alpha_0 + 2)\bigr)
\end{align*}

Since $\mathbb{E}[b_i b_j] = \frac{\alpha_i \alpha_j}{\alpha_0(\alpha_0+1)}$ for $i \neq j$, we get:
\begin{align*}
\mathbb{E}[b_i b_j \log b_i \log b_j] = \frac{\alpha_i \alpha_j}{\alpha_0(\alpha_0+1)} \left\{ -\psi_1(\alpha_0 + 2) + \bigl(\psi(\alpha_i + 1) - \psi(\alpha_0 + 2)\bigr)\bigl(\psi(\alpha_j + 1) - \psi(\alpha_0 + 2)\bigr) \right\}
\end{align*}

Combining these terms yields the final expression for $\mathbb{E}[\mathbf{h}^2]$:

\begin{align*}
\mathbb{E}[\mathbf{h}^2] = \frac{1}{\alpha_0(\alpha_0+1)} \Biggl[ & \sum_{i=1}^K \alpha_i(\alpha_i+1) \left\{ \psi_1(\alpha_i + 2) - \psi_1(\alpha_0 + 2) + (\psi(\alpha_i + 2) - \psi(\alpha_0 + 2))^2 \right\} \nonumber \\
& + \sum_{i \neq j} \alpha_i \alpha_j \left\{ -\psi_1(\alpha_0 + 2) + (\psi(\alpha_i + 1) - \psi(\alpha_0 + 2))(\psi(\alpha_j + 1) - \psi(\alpha_0 + 2)) \right\} \Biggr] 
\end{align*}

\paragraph{Variance $\Var[\mathbf{h}]$}
The variance of the entropy under the Dirichlet distribution can then be computed using the standard formula:
\[
\Var[\mathbf{h}] = \mathbb{E}[\mathbf{h}^2] - \mathbb{E}[\mathbf{h}]^2
\]
using the analytical expressions for the first and second moments derived above. 

\subsection{Mean and Variance of Expected Entropy under the Truncated Dirichlet Distribution}  \label{appdx-integrals-truncated}

Recall from Section~\ref{sec-estimator-probs} that we define the truncated Dirichlet distribution
\begin{equation}
\label{eq:trunc_dirichlet_app}
p_B^\mathrm{trunc}(\mathbf{b}; \mathcal{D}) \;=\; 
\begin{cases}
\dfrac{p_B(\mathbf{b})}{Z(\mathcal{D})} & \text{if} \;\; \mathbf{b}\in \text{constr}(\mathbf{b}, \mathcal{D}), \\[10pt]
0 & \text{otherwise},
\end{cases}
\end{equation}
where \(p_B(\mathbf{b})\) is the (untruncated) Dirichlet density with parameters \(\alpha + \mathbf{c}\), and
\[
Z(\mathcal{D}) \;=\; \int_{\mathbf{b} \,\in\, \text{constr}(\mathbf{b}, \mathcal{D})} p_B(\mathbf{b}) \,\mathrm{d}\mathbf{b}
\]
is the (unknown) normalizing constant. Our goal is to compute the expected value and the variance of the entropy. To this end, we want to compute expectations of the form
\[
\mathbb{E}\bigl[\Ent[\mathbf{b}] \bigr] \;=\; 
\int \!\Ent[\mathbf{b}] \; p_B^\mathrm{trunc}(\mathbf{b}; \mathcal{D}) \,\mathrm{d}\mathbf{b},
\quad\quad
\mathbb{E}\bigl[\Ent[\mathbf{b}]^2 \bigr] \;=\; 
\int \!\Ent[\mathbf{b}]^2 \; p_B^\mathrm{trunc}(\mathbf{b}; \mathcal{D}) \,\mathrm{d}\mathbf{b}.
\]
In the following, we focus only on the expected value of the entropy (the integral on the left), since the other integral will follow an analogous reasoning.

The first problem we face is that we cannot directly sample from $p_B^\mathrm{trunc}(\mathbf{b}; \mathcal{D})$. Therefore, we cannot approximate the integral via a simple Monte-Carlo approach.  
A naive solution would be standard \textit{importance sampling} (IS) \citep{Cochran1963Sampling, owen2013monte}. 
This approach consists of selecting a \textit{proposal} distribution \(q(\mathbf{b})\) with full support over the sample space, from which we know how to sample.
Then, we apply some algebra to the original integral as follows:
\[
\mathbb{E}[\Ent[\mathbf{b}]] 
\;=\; 
\int \Ent[\mathbf{b}] \, p_B^\mathrm{trunc}(\mathbf{b};\mathcal{D}) \,\mathrm{d}\mathbf{b}
\;=\;
\int \Ent[\mathbf{b}] \, \dfrac{p_B^\mathrm{trunc}(\mathbf{b};\mathcal{D})}{q(\mathbf{b})} q(\mathbf{b})\,\mathrm{d}\mathbf{b}
\;=\;
\mathbb{E}_{\mathbf{b}\sim q}\left[\dfrac{p_B^\mathrm{trunc}(\mathbf{b};\mathcal{D})}{q(\mathbf{b})}\Ent[\mathbf{b}]\right] 
.
\]

As a proposal distribution, we could simply pick the uniform distribution over the truncated simplex, truncated according to $\text{constr}(\mathbf{b}, \mathcal{D})$.

Now, we transformed the original expectation into an expectation with respect to a distribution from which we know how to sample. Therefore, we can apply Monte-Carlo to get an unbiased estimator. First, we sample $m$ samples from $q$. Then, we approximate the integral as follows:
\begin{equation}
    \widehat{\text{H}}_{\text{IS}} =  \frac{1}{m} \sum_{i=1}^m  
    \dfrac{p_B^\mathrm{trunc}(\mathbf{b}_i;\mathcal{D})}{q(\mathbf{b}_i)}\Ent[\mathbf{b}_i] .
\end{equation}

However, this estimator still requires knowledge of \(Z(\mathcal{D})\) to compute \(p_B^\mathrm{trunc}(\mathbf{b}_i;\mathcal{D})\) for the importance weights, which we do not know in general. For this reason, in this paper we will use the \textit{self-normalization} technique.

\paragraph{Self-normalized importance sampling.}
We can circumvent the explicit computation of \(Z(\mathcal{D})\) by using \emph{self-normalized} importance sampling \citep{owen2013monte, swaminathan2015self}. Self-normalized IS is similar to standard IS, but instead of dividing the sum by $m$, we use the sum of the importance weights:
\[
\sum_{i=1}^m \dfrac{p_B^\mathrm{trunc}(\mathbf{b}_i;\mathcal{D})}{q(\mathbf{b}_i)} \ .
\]

Since the expected value of this quantity is \(m\), this results in a biased-but-consistent estimator of the integral at hand \citep{swaminathan2015self}, and in practice it has been found out to often provide a better estimation than simple IS.

After some algebraic simplifications, the integral approximation is computed as follows:

\begin{equation}
    \widehat{\text{H}}_{\text{SN}} =  \frac{1}{\sum_{j=1}^m p_B(\mathbf{b}_j)} \sum_{i=1}^m  p_B(\mathbf{b}_i) \Ent[\mathbf{b}_i] .
\end{equation}

This simplified version follows from the following facts: 
\begin{itemize}
    \item we always sample from the truncated simplex, hence $ p_B^\mathrm{trunc}(\mathbf{b};\mathcal{D}) = \dfrac{p_B(\mathbf{b})}{Z(\mathcal{D})}$;
    \item the normalizing constant cancels out;
    \item the proposal distribution is constant and cancels out.
\end{itemize}

Now, this can be computed because we can sample from the truncated simplex and evaluate Dirichlet PDFs.

\section{Detailed Description of the Evaluation Methodology}
\label{appdx-eval}
We took the evaluation methodology of \citet{farquhar2024detecting} as a starting point and only modified it in ways which were necessary to adapt to our-use case. We describe the evaluation methodology in full in this section.

\paragraph{Two-Stage Architecture} The computation stage that does inference in the LLM (which takes over a week on a single A100 80GB) is separated from the stage that estimates semantic entropy (which only uses the CPU, taking on the order of 12 minutes). This is important so we don't have to repeat expensive GPU inference many times when changing details of the entropy estimation. Our code for the LLM inference stage is based on the code by \citet{farquhar2024detecting}, while the code for the second stage is new.\footnote{We will release the source code for both stages upon acceptance.} We used the following quantization settings: 8 bit for Llama-3.3-70B, 16 bit for Mistral, 32 bit for Llama-3.2 and Llama-2. 

\paragraph{LLM Inference Stage} Conservatively, we performed LLM inference for $N=100$ times for each prompt. We use the temperature $1$ for all generations. We also generate one extra answer with temperature $0.1$, about which we aim to decide whether or not it is a hallucination. The source code for the LLM inference stage is based on the code by \citet{farquhar2024detecting}. We found two bugs in the code, which we fixed. The first bug meant that the exact same LLM response (identical string) could be (rarely) assigned to different meaning classes, due to the imperfections of the DeBERTa entailment oracle. The second bug caused sum of probabilities of generated sequences to occasionally exceed one (it was caused by not storing the probabilities of special tokens ending the LLM response). 

\paragraph{Entropy Estimation Stage} We implemented the entropy estimation stage as a single Jupyter notebook. In the second stage, when we require a dataset for a smaller value of $N$, we subsample (with replacement). The notebook can be used to regenerate all the tables and figures in the paper.

\paragraph{Variable Budget} Our biggest deviation from the methodology of \citet{farquhar2024detecting} is that we consider different choices of $N$ (the number of samples emitted by the LLM), where \citet{farquhar2024detecting} only considered the case of $N=10$. 

\paragraph{Label Computation}  Even with supervised dataset, determining if a model hallucinates is not trivial because the LLM can phase the response in an arbitrary way. Following the work of \citet{farquhar2024detecting}, the label determining if a model hallucinates, used for AUROC computation, was obtained by computing the F1 metric and thresholding it at $0.5$.

\paragraph{Confidence Bars} All confidence bars for the AUROC estimates in our paper represent $1.96$ times the standard error. Confidence bars are generated by resampling (with replacement) the dataset for a given value of $N$. In our tables, the bold font is applied as follows: the estimator with the best mean performance is put in bold, together with all the others with overlapping confidence bars.

\section{Additional Experimental Results}
\label{additional-experiments}
Below, we provide full experimental results (measured AUROC values) for a fixed budget choice (the number of samples from the LLM) of $N \in \{1,\dots, 10\}$.
\\[1em]
{
\small \setlength{\tabcolsep}{2pt} 
\begin{minipage}{\linewidth} \centering
$N=1$ \begin{tabular}{llccgccc} 
\hline
LLM & Dataset  & LL & P(true) & SE-Bayes & SE-Histogram & SE-Rescaled & SE-Rescaled (h) \\ 
\hline
\multirow{4}{*}{Llama-2}  & NQ  & $0.583 \pm 0.000$ & $0.461 \pm 0.000$ & $\mathbf{0.669 \pm 0.013}$ & $0.500 \pm 0.000$ & $0.500 \pm 0.000$ & $0.500 \pm 0.000$ \\ 
  & SVAMP  & $0.631 \pm 0.000$ & $0.469 \pm 0.000$ & $\mathbf{0.796 \pm 0.057}$ & $0.500 \pm 0.000$ & $0.500 \pm 0.000$ & $0.500 \pm 0.000$ \\ 
  & Squad  & $0.624 \pm 0.000$ & $0.441 \pm 0.000$ & $\mathbf{0.701 \pm 0.009}$ & $0.500 \pm 0.000$ & $0.500 \pm 0.000$ & $0.500 \pm 0.000$ \\ 
  & Trivia QA  & $0.594 \pm 0.000$ & $0.436 \pm 0.000$ & $\mathbf{0.688 \pm 0.012}$ & $0.500 \pm 0.000$ & $0.500 \pm 0.000$ & $0.500 \pm 0.000$ \\ 
 \hline 
\multirow{4}{*}{Llama-3.2}  & NQ  & $0.627 \pm 0.000$ & $0.615 \pm 0.000$ & $\mathbf{0.683 \pm 0.012}$ & $0.500 \pm 0.000$ & $0.500 \pm 0.000$ & $0.500 \pm 0.000$ \\ 
  & SVAMP  & $0.647 \pm 0.000$ & $0.414 \pm 0.000$ & $\mathbf{0.812 \pm 0.012}$ & $0.500 \pm 0.000$ & $0.500 \pm 0.000$ & $0.500 \pm 0.000$ \\ 
  & Squad  & $\mathbf{0.610 \pm 0.000}$ & $0.555 \pm 0.000$ & $\mathbf{0.617 \pm 0.021}$ & $0.500 \pm 0.000$ & $0.500 \pm 0.000$ & $0.500 \pm 0.000$ \\ 
  & Trivia QA  & $0.614 \pm 0.000$ & $0.710 \pm 0.000$ & $\mathbf{0.731 \pm 0.008}$ & $0.500 \pm 0.000$ & $0.500 \pm 0.000$ & $0.500 \pm 0.000$ \\ 
 \hline 
\multirow{1}{*}{Llama-3.3-70B}  & Trivia QA  & $0.556 \pm 0.000$ & $0.686 \pm 0.000$ & $\mathbf{0.768 \pm 0.003}$ & $0.500 \pm 0.000$ & $0.500 \pm 0.000$ & $0.500 \pm 0.000$ \\ 
 \hline 
\multirow{4}{*}{Mistral}  & NQ  & $0.695 \pm 0.000$ & $\mathbf{0.731 \pm 0.000}$ & $0.635 \pm 0.008$ & $0.500 \pm 0.000$ & $0.500 \pm 0.000$ & $0.500 \pm 0.000$ \\ 
  & SVAMP  & $0.645 \pm 0.000$ & $\mathbf{0.843 \pm 0.000}$ & $\mathbf{0.844 \pm 0.038}$ & $0.500 \pm 0.000$ & $0.500 \pm 0.000$ & $0.500 \pm 0.000$ \\ 
  & Squad  & $\mathbf{0.698 \pm 0.000}$ & $0.687 \pm 0.000$ & $0.628 \pm 0.010$ & $0.500 \pm 0.000$ & $0.500 \pm 0.000$ & $0.500 \pm 0.000$ \\ 
  & Trivia QA  & $\mathbf{0.672 \pm 0.000}$ & $0.647 \pm 0.000$ & $0.633 \pm 0.003$ & $0.500 \pm 0.000$ & $0.500 \pm 0.000$ & $0.500 \pm 0.000$ \\ 
 \hline 
\end{tabular}

\vspace{1em}
\end{minipage}
\begin{minipage}{\linewidth} \centering
$N=2$  

\vspace{1em}
\end{minipage}
\begin{minipage}{\linewidth} \centering
$N=3$ \begin{tabular}{llccgccc} 
\hline
LLM & Dataset  & LL & P(true) & SE-Bayes & SE-Histogram & SE-Rescaled & SE-Rescaled (h) \\ 
\hline
\multirow{4}{*}{Llama-2}  & NQ  & $0.583 \pm 0.000$ & $0.461 \pm 0.000$ & $\mathbf{0.746 \pm 0.006}$ & $0.707 \pm 0.009$ & $0.678 \pm 0.012$ & $0.700 \pm 0.011$ \\ 
  & SVAMP  & $0.631 \pm 0.000$ & $0.469 \pm 0.000$ & $\mathbf{0.862 \pm 0.020}$ & $0.810 \pm 0.028$ & $0.804 \pm 0.031$ & $0.811 \pm 0.031$ \\ 
  & Squad  & $0.624 \pm 0.000$ & $0.441 \pm 0.000$ & $\mathbf{0.754 \pm 0.013}$ & $0.710 \pm 0.015$ & $0.679 \pm 0.019$ & $0.708 \pm 0.012$ \\ 
  & Trivia QA  & $0.594 \pm 0.000$ & $0.436 \pm 0.000$ & $\mathbf{0.753 \pm 0.004}$ & $0.706 \pm 0.009$ & $0.710 \pm 0.007$ & $0.709 \pm 0.009$ \\ 
 \hline 
\multirow{4}{*}{Llama-3.2}  & NQ  & $0.627 \pm 0.000$ & $0.615 \pm 0.000$ & $\mathbf{0.745 \pm 0.008}$ & $0.697 \pm 0.008$ & $0.672 \pm 0.012$ & $0.699 \pm 0.008$ \\ 
  & SVAMP  & $0.647 \pm 0.000$ & $0.414 \pm 0.000$ & $\mathbf{0.843 \pm 0.013}$ & $0.801 \pm 0.019$ & $0.800 \pm 0.021$ & $\mathbf{0.811 \pm 0.021}$ \\ 
  & Squad  & $0.610 \pm 0.000$ & $0.555 \pm 0.000$ & $\mathbf{0.686 \pm 0.021}$ & $\mathbf{0.659 \pm 0.016}$ & $\mathbf{0.649 \pm 0.017}$ & $\mathbf{0.654 \pm 0.014}$ \\ 
  & Trivia QA  & $0.614 \pm 0.000$ & $0.710 \pm 0.000$ & $\mathbf{0.781 \pm 0.008}$ & $0.738 \pm 0.006$ & $0.729 \pm 0.006$ & $0.746 \pm 0.007$ \\ 
 \hline 
\multirow{1}{*}{Llama-3.3-70B}  & Trivia QA  & $0.556 \pm 0.000$ & $0.686 \pm 0.000$ & $\mathbf{0.781 \pm 0.006}$ & $0.710 \pm 0.003$ & $0.702 \pm 0.004$ & $0.706 \pm 0.003$ \\ 
 \hline 
\multirow{4}{*}{Mistral}  & NQ  & $0.695 \pm 0.000$ & $0.731 \pm 0.000$ & $\mathbf{0.743 \pm 0.009}$ & $0.718 \pm 0.010$ & $0.716 \pm 0.009$ & $0.710 \pm 0.010$ \\ 
  & SVAMP  & $0.645 \pm 0.000$ & $0.843 \pm 0.000$ & $\mathbf{0.887 \pm 0.010}$ & $0.858 \pm 0.017$ & $0.851 \pm 0.021$ & $\mathbf{0.870 \pm 0.022}$ \\ 
  & Squad  & $\mathbf{0.698 \pm 0.000}$ & $0.687 \pm 0.000$ & $\mathbf{0.694 \pm 0.010}$ & $0.676 \pm 0.018$ & $\mathbf{0.698 \pm 0.014}$ & $\mathbf{0.685 \pm 0.020}$ \\ 
  & Trivia QA  & $0.672 \pm 0.000$ & $0.647 \pm 0.000$ & $\mathbf{0.691 \pm 0.007}$ & $\mathbf{0.678 \pm 0.010}$ & $0.676 \pm 0.008$ & $\mathbf{0.684 \pm 0.010}$ \\ 
 \hline 
\end{tabular}

\vspace{1em}
\end{minipage}
\begin{minipage}{\linewidth} \centering
$N=4$ \begin{tabular}{llccgccc} 
\hline
LLM & Dataset  & LL & P(true) & SE-Bayes & SE-Histogram & SE-Rescaled & SE-Rescaled (h) \\ 
\hline
\multirow{4}{*}{Llama-2}  & NQ  & $0.583 \pm 0.000$ & $0.461 \pm 0.000$ & $\mathbf{0.754 \pm 0.002}$ & $0.729 \pm 0.007$ & $0.695 \pm 0.014$ & $0.720 \pm 0.008$ \\ 
  & SVAMP  & $0.631 \pm 0.000$ & $0.469 \pm 0.000$ & $\mathbf{0.899 \pm 0.013}$ & $0.866 \pm 0.010$ & $0.870 \pm 0.015$ & $0.872 \pm 0.011$ \\ 
  & Squad  & $0.624 \pm 0.000$ & $0.441 \pm 0.000$ & $\mathbf{0.771 \pm 0.009}$ & $0.742 \pm 0.006$ & $0.700 \pm 0.008$ & $0.734 \pm 0.004$ \\ 
  & Trivia QA  & $0.594 \pm 0.000$ & $0.436 \pm 0.000$ & $\mathbf{0.767 \pm 0.003}$ & $0.735 \pm 0.007$ & $0.733 \pm 0.007$ & $0.734 \pm 0.007$ \\ 
 \hline 
\multirow{4}{*}{Llama-3.2}  & NQ  & $0.627 \pm 0.000$ & $0.615 \pm 0.000$ & $\mathbf{0.759 \pm 0.007}$ & $0.722 \pm 0.006$ & $0.684 \pm 0.015$ & $0.723 \pm 0.007$ \\ 
  & SVAMP  & $0.647 \pm 0.000$ & $0.414 \pm 0.000$ & $\mathbf{0.868 \pm 0.005}$ & $0.843 \pm 0.005$ & $0.837 \pm 0.010$ & $0.850 \pm 0.003$ \\ 
  & Squad  & $0.610 \pm 0.000$ & $0.555 \pm 0.000$ & $\mathbf{0.686 \pm 0.015}$ & $\mathbf{0.686 \pm 0.014}$ & $0.649 \pm 0.019$ & $\mathbf{0.676 \pm 0.018}$ \\ 
  & Trivia QA  & $0.614 \pm 0.000$ & $0.710 \pm 0.000$ & $\mathbf{0.786 \pm 0.007}$ & $0.759 \pm 0.009$ & $0.746 \pm 0.005$ & $0.764 \pm 0.009$ \\ 
 \hline 
\multirow{1}{*}{Llama-3.3-70B}  & Trivia QA  & $0.556 \pm 0.000$ & $0.686 \pm 0.000$ & $\mathbf{0.793 \pm 0.002}$ & $0.739 \pm 0.002$ & $0.734 \pm 0.003$ & $0.738 \pm 0.002$ \\ 
 \hline 
\multirow{4}{*}{Mistral}  & NQ  & $0.695 \pm 0.000$ & $0.731 \pm 0.000$ & $\mathbf{0.762 \pm 0.012}$ & $0.740 \pm 0.008$ & $0.730 \pm 0.006$ & $0.737 \pm 0.013$ \\ 
  & SVAMP  & $0.645 \pm 0.000$ & $0.843 \pm 0.000$ & $\mathbf{0.880 \pm 0.015}$ & $\mathbf{0.865 \pm 0.017}$ & $\mathbf{0.851 \pm 0.019}$ & $\mathbf{0.877 \pm 0.019}$ \\ 
  & Squad  & $0.698 \pm 0.000$ & $0.687 \pm 0.000$ & $\mathbf{0.720 \pm 0.007}$ & $\mathbf{0.705 \pm 0.011}$ & $\mathbf{0.700 \pm 0.014}$ & $\mathbf{0.705 \pm 0.013}$ \\ 
  & Trivia QA  & $0.672 \pm 0.000$ & $0.647 \pm 0.000$ & $\mathbf{0.692 \pm 0.009}$ & $\mathbf{0.682 \pm 0.016}$ & $\mathbf{0.678 \pm 0.013}$ & $\mathbf{0.687 \pm 0.012}$ \\ 
 \hline 
\end{tabular}

\vspace{1em}
\end{minipage}
\begin{minipage}{\linewidth} \centering
$N=5$ 
\vspace{1em}
\end{minipage}
\begin{minipage}{\linewidth} \centering
$N=6$ \begin{tabular}{llccgccc} 
\hline
LLM & Dataset  & LL & P(true) & SE-Bayes & SE-Histogram & SE-Rescaled & SE-Rescaled (h) \\ 
\hline
\multirow{4}{*}{Llama-2}  & NQ  & $0.583 \pm 0.000$ & $0.461 \pm 0.000$ & $\mathbf{0.755 \pm 0.005}$ & $0.739 \pm 0.010$ & $0.702 \pm 0.008$ & $0.733 \pm 0.010$ \\ 
  & SVAMP  & $0.631 \pm 0.000$ & $0.469 \pm 0.000$ & $\mathbf{0.892 \pm 0.013}$ & $\mathbf{0.876 \pm 0.018}$ & $\mathbf{0.876 \pm 0.017}$ & $\mathbf{0.880 \pm 0.016}$ \\ 
  & Squad  & $0.624 \pm 0.000$ & $0.441 \pm 0.000$ & $\mathbf{0.776 \pm 0.005}$ & $0.761 \pm 0.009$ & $0.725 \pm 0.006$ & $0.756 \pm 0.009$ \\ 
  & Trivia QA  & $0.594 \pm 0.000$ & $0.436 \pm 0.000$ & $\mathbf{0.771 \pm 0.006}$ & $0.743 \pm 0.009$ & $0.747 \pm 0.008$ & $0.745 \pm 0.009$ \\ 
 \hline 
\multirow{4}{*}{Llama-3.2}  & NQ  & $0.627 \pm 0.000$ & $0.615 \pm 0.000$ & $\mathbf{0.765 \pm 0.005}$ & $0.741 \pm 0.004$ & $0.690 \pm 0.012$ & $0.738 \pm 0.006$ \\ 
  & SVAMP  & $0.647 \pm 0.000$ & $0.414 \pm 0.000$ & $\mathbf{0.862 \pm 0.007}$ & $0.844 \pm 0.009$ & $0.833 \pm 0.010$ & $0.844 \pm 0.011$ \\ 
  & Squad  & $0.610 \pm 0.000$ & $0.555 \pm 0.000$ & $\mathbf{0.716 \pm 0.006}$ & $\mathbf{0.718 \pm 0.010}$ & $0.671 \pm 0.012$ & $\mathbf{0.715 \pm 0.009}$ \\ 
  & Trivia QA  & $0.614 \pm 0.000$ & $0.710 \pm 0.000$ & $\mathbf{0.795 \pm 0.005}$ & $\mathbf{0.783 \pm 0.008}$ & $0.771 \pm 0.008$ & $\mathbf{0.784 \pm 0.008}$ \\ 
 \hline 
\multirow{1}{*}{Llama-3.3-70B}  & Trivia QA  & $0.556 \pm 0.000$ & $0.686 \pm 0.000$ & $\mathbf{0.798 \pm 0.003}$ & $0.759 \pm 0.006$ & $0.751 \pm 0.004$ & $0.756 \pm 0.007$ \\ 
 \hline 
\multirow{4}{*}{Mistral}  & NQ  & $0.695 \pm 0.000$ & $0.731 \pm 0.000$ & $\mathbf{0.774 \pm 0.010}$ & $\mathbf{0.763 \pm 0.009}$ & $0.740 \pm 0.008$ & $\mathbf{0.756 \pm 0.008}$ \\ 
  & SVAMP  & $0.645 \pm 0.000$ & $0.843 \pm 0.000$ & $\mathbf{0.888 \pm 0.020}$ & $\mathbf{0.878 \pm 0.023}$ & $\mathbf{0.860 \pm 0.018}$ & $\mathbf{0.885 \pm 0.018}$ \\ 
  & Squad  & $0.698 \pm 0.000$ & $0.687 \pm 0.000$ & $\mathbf{0.726 \pm 0.004}$ & $\mathbf{0.721 \pm 0.006}$ & $0.700 \pm 0.011$ & $0.719 \pm 0.002$ \\ 
  & Trivia QA  & $0.672 \pm 0.000$ & $0.647 \pm 0.000$ & $\mathbf{0.691 \pm 0.010}$ & $\mathbf{0.698 \pm 0.012}$ & $\mathbf{0.690 \pm 0.011}$ & $\mathbf{0.697 \pm 0.011}$ \\ 
 \hline 
\end{tabular}

\vspace{1em}
\end{minipage}
\begin{minipage}{\linewidth} \centering
$N=7$ \begin{tabular}{llccgccc} 
\hline
LLM & Dataset  & LL & P(true) & SE-Bayes & SE-Histogram & SE-Rescaled & SE-Rescaled (h) \\ 
\hline
\multirow{4}{*}{Llama-2}  & NQ  & $0.583 \pm 0.000$ & $0.461 \pm 0.000$ & $\mathbf{0.760 \pm 0.002}$ & $0.744 \pm 0.004$ & $0.712 \pm 0.004$ & $0.738 \pm 0.004$ \\ 
  & SVAMP  & $0.631 \pm 0.000$ & $0.469 \pm 0.000$ & $\mathbf{0.895 \pm 0.016}$ & $\mathbf{0.875 \pm 0.018}$ & $\mathbf{0.875 \pm 0.017}$ & $\mathbf{0.880 \pm 0.017}$ \\ 
  & Squad  & $0.624 \pm 0.000$ & $0.441 \pm 0.000$ & $\mathbf{0.778 \pm 0.004}$ & $0.767 \pm 0.004$ & $0.718 \pm 0.007$ & $0.763 \pm 0.005$ \\ 
  & Trivia QA  & $0.594 \pm 0.000$ & $0.436 \pm 0.000$ & $\mathbf{0.767 \pm 0.003}$ & $0.750 \pm 0.006$ & $0.752 \pm 0.006$ & $0.751 \pm 0.006$ \\ 
 \hline 
\multirow{4}{*}{Llama-3.2}  & NQ  & $0.627 \pm 0.000$ & $0.615 \pm 0.000$ & $\mathbf{0.770 \pm 0.006}$ & $0.750 \pm 0.007$ & $0.697 \pm 0.009$ & $0.747 \pm 0.006$ \\ 
  & SVAMP  & $0.647 \pm 0.000$ & $0.414 \pm 0.000$ & $\mathbf{0.862 \pm 0.011}$ & $\mathbf{0.847 \pm 0.019}$ & $\mathbf{0.858 \pm 0.020}$ & $\mathbf{0.847 \pm 0.020}$ \\ 
  & Squad  & $0.610 \pm 0.000$ & $0.555 \pm 0.000$ & $\mathbf{0.723 \pm 0.006}$ & $\mathbf{0.725 \pm 0.006}$ & $0.667 \pm 0.008$ & $\mathbf{0.719 \pm 0.006}$ \\ 
  & Trivia QA  & $0.614 \pm 0.000$ & $0.710 \pm 0.000$ & $\mathbf{0.795 \pm 0.001}$ & $0.782 \pm 0.004$ & $0.774 \pm 0.008$ & $0.783 \pm 0.003$ \\ 
 \hline 
\multirow{1}{*}{Llama-3.3-70B}  & Trivia QA  & $0.556 \pm 0.000$ & $0.686 \pm 0.000$ & $\mathbf{0.796 \pm 0.004}$ & $0.765 \pm 0.006$ & $0.756 \pm 0.005$ & $0.763 \pm 0.007$ \\ 
 \hline 
\multirow{4}{*}{Mistral}  & NQ  & $0.695 \pm 0.000$ & $0.731 \pm 0.000$ & $\mathbf{0.782 \pm 0.003}$ & $0.770 \pm 0.005$ & $0.736 \pm 0.005$ & $0.764 \pm 0.005$ \\ 
  & SVAMP  & $0.645 \pm 0.000$ & $0.843 \pm 0.000$ & $\mathbf{0.907 \pm 0.005}$ & $0.903 \pm 0.001$ & $0.890 \pm 0.015$ & $\mathbf{0.910 \pm 0.004}$ \\ 
  & Squad  & $0.698 \pm 0.000$ & $0.687 \pm 0.000$ & $\mathbf{0.739 \pm 0.008}$ & $\mathbf{0.735 \pm 0.008}$ & $0.702 \pm 0.010$ & $\mathbf{0.728 \pm 0.007}$ \\ 
  & Trivia QA  & $0.672 \pm 0.000$ & $0.647 \pm 0.000$ & $\mathbf{0.697 \pm 0.011}$ & $\mathbf{0.702 \pm 0.007}$ & $\mathbf{0.696 \pm 0.009}$ & $\mathbf{0.702 \pm 0.009}$ \\ 
 \hline 
\end{tabular}

\vspace{1em}
\end{minipage}
\begin{minipage}{\linewidth} \centering
$N=8$ \begin{tabular}{llccgccc} 
\hline
LLM & Dataset  & LL & P(true) & SE-Bayes & SE-Histogram & SE-Rescaled & SE-Rescaled (h) \\ 
\hline
\multirow{4}{*}{Llama-2}  & NQ  & $0.583 \pm 0.000$ & $0.461 \pm 0.000$ & $\mathbf{0.761 \pm 0.005}$ & $0.749 \pm 0.006$ & $0.722 \pm 0.010$ & $0.742 \pm 0.007$ \\ 
  & SVAMP  & $0.631 \pm 0.000$ & $0.469 \pm 0.000$ & $\mathbf{0.900 \pm 0.002}$ & $0.888 \pm 0.007$ & $0.882 \pm 0.010$ & $0.889 \pm 0.007$ \\ 
  & Squad  & $0.624 \pm 0.000$ & $0.441 \pm 0.000$ & $\mathbf{0.784 \pm 0.007}$ & $\mathbf{0.778 \pm 0.006}$ & $0.734 \pm 0.015$ & $\mathbf{0.775 \pm 0.008}$ \\ 
  & Trivia QA  & $0.594 \pm 0.000$ & $0.436 \pm 0.000$ & $\mathbf{0.778 \pm 0.002}$ & $0.761 \pm 0.007$ & $0.762 \pm 0.007$ & $0.760 \pm 0.007$ \\ 
 \hline 
\multirow{4}{*}{Llama-3.2}  & NQ  & $0.627 \pm 0.000$ & $0.615 \pm 0.000$ & $\mathbf{0.773 \pm 0.005}$ & $0.753 \pm 0.005$ & $0.704 \pm 0.004$ & $0.748 \pm 0.006$ \\ 
  & SVAMP  & $0.647 \pm 0.000$ & $0.414 \pm 0.000$ & $\mathbf{0.871 \pm 0.014}$ & $\mathbf{0.868 \pm 0.013}$ & $\mathbf{0.862 \pm 0.012}$ & $\mathbf{0.868 \pm 0.011}$ \\ 
  & Squad  & $0.610 \pm 0.000$ & $0.555 \pm 0.000$ & $\mathbf{0.724 \pm 0.009}$ & $\mathbf{0.724 \pm 0.008}$ & $0.668 \pm 0.014$ & $\mathbf{0.720 \pm 0.009}$ \\ 
  & Trivia QA  & $0.614 \pm 0.000$ & $0.710 \pm 0.000$ & $\mathbf{0.797 \pm 0.001}$ & $0.785 \pm 0.002$ & $0.778 \pm 0.004$ & $0.787 \pm 0.002$ \\ 
 \hline 
\multirow{1}{*}{Llama-3.3-70B}  & Trivia QA  & $0.556 \pm 0.000$ & $0.686 \pm 0.000$ & $\mathbf{0.798 \pm 0.005}$ & $0.772 \pm 0.004$ & $0.764 \pm 0.005$ & $0.770 \pm 0.003$ \\ 
 \hline 
\multirow{4}{*}{Mistral}  & NQ  & $0.695 \pm 0.000$ & $0.731 \pm 0.000$ & $\mathbf{0.791 \pm 0.004}$ & $\mathbf{0.784 \pm 0.005}$ & $0.742 \pm 0.005$ & $0.775 \pm 0.005$ \\ 
  & SVAMP  & $0.645 \pm 0.000$ & $0.843 \pm 0.000$ & $\mathbf{0.892 \pm 0.014}$ & $\mathbf{0.888 \pm 0.015}$ & $\mathbf{0.882 \pm 0.012}$ & $\mathbf{0.893 \pm 0.016}$ \\ 
  & Squad  & $0.698 \pm 0.000$ & $0.687 \pm 0.000$ & $\mathbf{0.744 \pm 0.005}$ & $\mathbf{0.745 \pm 0.005}$ & $0.710 \pm 0.004$ & $\mathbf{0.737 \pm 0.005}$ \\ 
  & Trivia QA  & $0.672 \pm 0.000$ & $0.647 \pm 0.000$ & $\mathbf{0.699 \pm 0.003}$ & $\mathbf{0.702 \pm 0.006}$ & $\mathbf{0.701 \pm 0.010}$ & $\mathbf{0.704 \pm 0.008}$ \\ 
 \hline 
\end{tabular}

\vspace{1em}
\end{minipage}
\begin{minipage}{\linewidth} \centering
$N=9$ \begin{tabular}{llccgccc} 
\hline
LLM & Dataset  & LL & P(true) & SE-Bayes & SE-Histogram & SE-Rescaled & SE-Rescaled (h) \\ 
\hline
\multirow{4}{*}{Llama-2}  & NQ  & $0.583 \pm 0.000$ & $0.461 \pm 0.000$ & $\mathbf{0.759 \pm 0.005}$ & $\mathbf{0.749 \pm 0.008}$ & $0.723 \pm 0.008$ & $0.744 \pm 0.009$ \\ 
  & SVAMP  & $0.631 \pm 0.000$ & $0.469 \pm 0.000$ & $\mathbf{0.894 \pm 0.005}$ & $0.877 \pm 0.007$ & $0.879 \pm 0.006$ & $0.880 \pm 0.007$ \\ 
  & Squad  & $0.624 \pm 0.000$ & $0.441 \pm 0.000$ & $\mathbf{0.790 \pm 0.008}$ & $\mathbf{0.785 \pm 0.010}$ & $0.738 \pm 0.010$ & $\mathbf{0.780 \pm 0.011}$ \\ 
  & Trivia QA  & $0.594 \pm 0.000$ & $0.436 \pm 0.000$ & $\mathbf{0.780 \pm 0.004}$ & $0.756 \pm 0.008$ & $0.758 \pm 0.009$ & $0.755 \pm 0.006$ \\ 
 \hline 
\multirow{4}{*}{Llama-3.2}  & NQ  & $0.627 \pm 0.000$ & $0.615 \pm 0.000$ & $\mathbf{0.770 \pm 0.004}$ & $0.753 \pm 0.006$ & $0.708 \pm 0.007$ & $0.747 \pm 0.005$ \\ 
  & SVAMP  & $0.647 \pm 0.000$ & $0.414 \pm 0.000$ & $\mathbf{0.875 \pm 0.006}$ & $\mathbf{0.876 \pm 0.010}$ & $\mathbf{0.880 \pm 0.012}$ & $\mathbf{0.875 \pm 0.010}$ \\ 
  & Squad  & $0.610 \pm 0.000$ & $0.555 \pm 0.000$ & $\mathbf{0.732 \pm 0.010}$ & $\mathbf{0.733 \pm 0.011}$ & $0.674 \pm 0.010$ & $\mathbf{0.727 \pm 0.012}$ \\ 
  & Trivia QA  & $0.614 \pm 0.000$ & $0.710 \pm 0.000$ & $\mathbf{0.800 \pm 0.004}$ & $\mathbf{0.793 \pm 0.006}$ & $0.788 \pm 0.007$ & $\mathbf{0.794 \pm 0.005}$ \\ 
 \hline 
\multirow{1}{*}{Llama-3.3-70B}  & Trivia QA  & $0.556 \pm 0.000$ & $0.686 \pm 0.000$ & $\mathbf{0.794 \pm 0.002}$ & $0.770 \pm 0.006$ & $0.762 \pm 0.006$ & $0.769 \pm 0.005$ \\ 
 \hline 
\multirow{4}{*}{Mistral}  & NQ  & $0.695 \pm 0.000$ & $0.731 \pm 0.000$ & $\mathbf{0.795 \pm 0.005}$ & $0.785 \pm 0.004$ & $0.752 \pm 0.008$ & $0.777 \pm 0.004$ \\ 
  & SVAMP  & $0.645 \pm 0.000$ & $0.843 \pm 0.000$ & $\mathbf{0.903 \pm 0.007}$ & $\mathbf{0.895 \pm 0.010}$ & $0.877 \pm 0.015$ & $\mathbf{0.896 \pm 0.009}$ \\ 
  & Squad  & $0.698 \pm 0.000$ & $0.687 \pm 0.000$ & $\mathbf{0.749 \pm 0.006}$ & $\mathbf{0.747 \pm 0.007}$ & $0.717 \pm 0.008$ & $\mathbf{0.739 \pm 0.008}$ \\ 
  & Trivia QA  & $0.672 \pm 0.000$ & $0.647 \pm 0.000$ & $\mathbf{0.698 \pm 0.009}$ & $\mathbf{0.704 \pm 0.012}$ & $\mathbf{0.702 \pm 0.013}$ & $\mathbf{0.704 \pm 0.013}$ \\ 
 \hline 
\end{tabular}

\vspace{1em}
\end{minipage}
\begin{minipage}{\linewidth} \centering
$N=10$ \begin{tabular}{llccgccc} 
\hline
LLM & Dataset  & LL & P(true) & SE-Bayes & SE-Histogram & SE-Rescaled & SE-Rescaled (h) \\ 
\hline
\multirow{4}{*}{Llama-2}  & NQ  & $0.583 \pm 0.000$ & $0.461 \pm 0.000$ & $\mathbf{0.763 \pm 0.003}$ & $0.753 \pm 0.002$ & $0.731 \pm 0.007$ & $0.745 \pm 0.002$ \\ 
  & SVAMP  & $0.631 \pm 0.000$ & $0.469 \pm 0.000$ & $\mathbf{0.896 \pm 0.007}$ & $\mathbf{0.886 \pm 0.006}$ & $\mathbf{0.891 \pm 0.007}$ & $\mathbf{0.890 \pm 0.010}$ \\ 
  & Squad  & $0.624 \pm 0.000$ & $0.441 \pm 0.000$ & $\mathbf{0.785 \pm 0.003}$ & $\mathbf{0.782 \pm 0.006}$ & $0.735 \pm 0.005$ & $0.776 \pm 0.005$ \\ 
  & Trivia QA  & $0.594 \pm 0.000$ & $0.436 \pm 0.000$ & $\mathbf{0.776 \pm 0.003}$ & $0.761 \pm 0.004$ & $0.765 \pm 0.004$ & $0.760 \pm 0.004$ \\ 
 \hline 
\multirow{4}{*}{Llama-3.2}  & NQ  & $0.627 \pm 0.000$ & $0.615 \pm 0.000$ & $\mathbf{0.777 \pm 0.004}$ & $0.761 \pm 0.004$ & $0.711 \pm 0.007$ & $0.756 \pm 0.005$ \\ 
  & SVAMP  & $0.647 \pm 0.000$ & $0.414 \pm 0.000$ & $\mathbf{0.863 \pm 0.011}$ & $\mathbf{0.861 \pm 0.010}$ & $\mathbf{0.856 \pm 0.010}$ & $\mathbf{0.859 \pm 0.012}$ \\ 
  & Squad  & $0.610 \pm 0.000$ & $0.555 \pm 0.000$ & $\mathbf{0.737 \pm 0.007}$ & $\mathbf{0.736 \pm 0.008}$ & $0.686 \pm 0.008$ & $\mathbf{0.729 \pm 0.008}$ \\ 
  & Trivia QA  & $0.614 \pm 0.000$ & $0.710 \pm 0.000$ & $\mathbf{0.801 \pm 0.001}$ & $0.793 \pm 0.002$ & $0.791 \pm 0.004$ & $0.793 \pm 0.002$ \\ 
 \hline 
\multirow{1}{*}{Llama-3.3-70B}  & Trivia QA  & $0.556 \pm 0.000$ & $0.686 \pm 0.000$ & $\mathbf{0.798 \pm 0.004}$ & $0.777 \pm 0.002$ & $0.771 \pm 0.003$ & $0.777 \pm 0.003$ \\ 
 \hline 
\multirow{4}{*}{Mistral}  & NQ  & $0.695 \pm 0.000$ & $0.731 \pm 0.000$ & $\mathbf{0.796 \pm 0.004}$ & $\mathbf{0.790 \pm 0.004}$ & $0.758 \pm 0.006$ & $0.779 \pm 0.004$ \\ 
  & SVAMP  & $0.645 \pm 0.000$ & $0.843 \pm 0.000$ & $\mathbf{0.915 \pm 0.012}$ & $\mathbf{0.916 \pm 0.011}$ & $0.885 \pm 0.015$ & $\mathbf{0.919 \pm 0.010}$ \\ 
  & Squad  & $0.698 \pm 0.000$ & $0.687 \pm 0.000$ & $\mathbf{0.752 \pm 0.006}$ & $\mathbf{0.749 \pm 0.007}$ & $0.714 \pm 0.002$ & $\mathbf{0.741 \pm 0.006}$ \\ 
  & Trivia QA  & $0.672 \pm 0.000$ & $0.647 \pm 0.000$ & $\mathbf{0.697 \pm 0.002}$ & $\mathbf{0.702 \pm 0.009}$ & $\mathbf{0.697 \pm 0.008}$ & $\mathbf{0.702 \pm 0.009}$ \\ 
 \hline 
\end{tabular}

\end{minipage}
}
\section{Hyperparameter Sensitivity}
\label{appdx-hyper-s}
\flushleft
We examined the sensitivity of our method to the hyperparameter $\alpha$, finding it to be insensitive. The plots below show performance for Llama 2 on the Trivia QA dataset for $\alpha \in \{ 0.1, 0.5, 1.0 \}$. A value of $0.5$ was used for all other experiments.
\\[2em]
{
\centering
\begin{minipage}{0.32\linewidth}
\includegraphics[width=\linewidth]{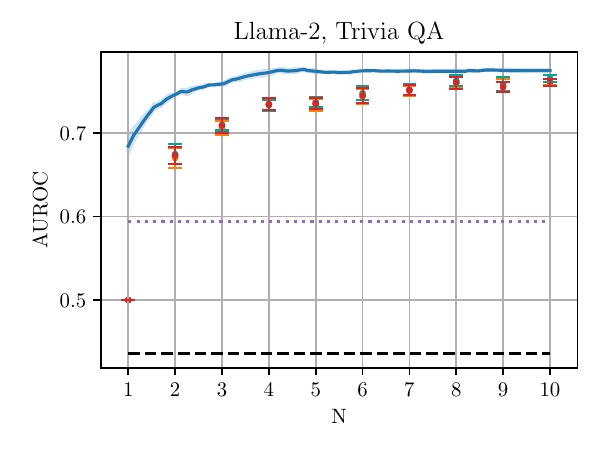}
\centering \hspace*{2em} $\alpha=0.1$
\end{minipage}
\begin{minipage}{0.32\linewidth}
\includegraphics[width=\linewidth]{results/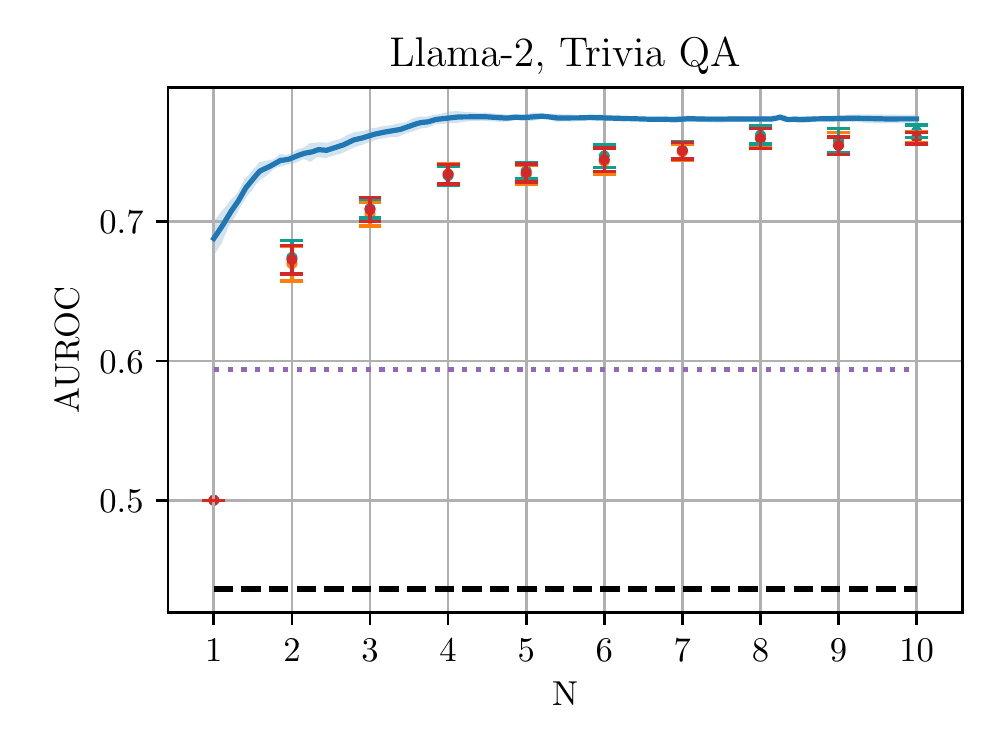}
\centering \hspace*{2em} $\alpha=0.5$
\end{minipage}
\begin{minipage}{0.32\linewidth}
\includegraphics[width=\linewidth]{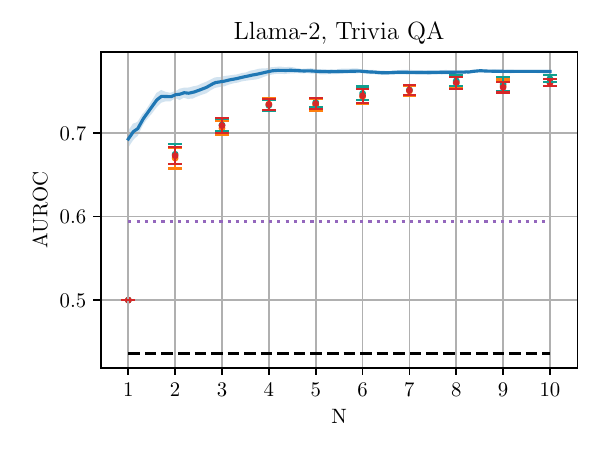}
\centering \hspace*{2em} $\alpha=1.0$
\end{minipage}

    \footnotesize \begin{center}
    \raisebox{-0.15em}{\includegraphics[width=2em]{results/legend_marker_BE.pdf}} Bayesian SE \hspace{0.3em}
    \raisebox{-0.15em}{\includegraphics[width=2em]{results/legend_marker_SE-Histogram.pdf}} Histogram \hspace{0.3em}
    \raisebox{-0.15em}{\includegraphics[width=2em]{results/legend_marker_SE-Rescaled_h.pdf}} Rescaled (heuristic) 
    \hspace{0.3em}
    \raisebox{-0.15em}{\includegraphics[width=2em]{results/legend_marker_SE-Rescaled.pdf}} Rescaled \hspace{0.3em}
    \raisebox{-0.15em}{\includegraphics[width=2em]{results/legend_marker_LL.pdf}} Log Likelihood \hspace{0.3em}
    \raisebox{-0.15em}{\includegraphics[width=2em]{results/legend_marker_Ptrue.pdf}} P(True)
    \end{center}
}
\end{document}